\definecolor{wkblue}{RGB}{179,229,226}
\definecolor{meta-color}{RGB}{16,177,168}
\newenvironment{itemize*}%
 {\leftmargini=12pt\begin{itemize}%
  \setlength{\itemsep}{0pt}%
  \setlength{\parskip}{0pt}%
  }%
 {\end{itemize}}
\newenvironment{enumerate*}%
 {\begin{enumerate}%
  \setlength{\itemsep}{0pt}%
  \setlength{\parskip}{0pt}}%
 {\end{enumerate}}
\newcommand{\mathpile}{{\textsc{MathPile }}}
\newcommand{\citeg}[1]{\citep[][\emph{inter alia}]{#1}}
\NewDocumentCommand{\duptextmultilines}{v}{%
    \textcolor{blue}{\emph{#1}}%
}
\definecolor{mypink}{HTML}{F76E86}
\definecolor{myblue}{HTML}{FF9900}
\newcommand{\duptext}[1]{\textcolor{mypink}{\emph{#1}}}
\newcommand{\datacontamination}[1]{\textcolor{myblue}{\emph{#1}}}
\title{\mathpile: A Billion-Token-Scale Pre-training Corpus for Math}
\author{
Zengzhi Wang\textsuperscript{\textrm{1,3,4}}\space\space\space\space\space\space
Xuefeng Li\textsuperscript{\textrm{1,4}}\space\space\space\space\space\space
Rui Xia\textsuperscript{\textrm{3}}\space\space\space\space\space\space
Pengfei Liu\textsuperscript{\textrm{1,2,4}}\thanks{~~Corresponding author} \\
\textsuperscript{1}Shanghai Jiao Tong University\space\space\space \
\textsuperscript{2}Shanghai Artificial Intelligence Laboratory \\
\textsuperscript{3}Nanjing University of Science and Technology \
\textsuperscript{4}Generative AI Research Lab (GAIR) \\
\texttt{zzwang.nlp@gmail.com} \quad \texttt{pengfei@sjtu.edu.cn}
}
\begin{document}

\doparttoc
\faketableofcontents

\maketitle

\begin{abstract}

High-quality, large-scale corpora are the cornerstone of building foundation models. In this work, we introduce \mathpile, a diverse and high-quality math-centric corpus comprising about 9.5 billion tokens. Throughout its creation, we adhered to the principle of ``\emph{less is more}'', firmly believing in the supremacy of data quality over quantity, even in the pre-training phase. Our meticulous data collection and processing efforts included a complex suite of preprocessing, prefiltering, language identification, cleaning, filtering, and deduplication, ensuring the high quality of our corpus. Furthermore, we performed data contamination detection on downstream benchmark test sets to eliminate duplicates and conducted continual pre-training experiments, booting the performance on common mathematical reasoning benchmarks. We aim for our \mathpile to boost language models' mathematical reasoning abilities and open-source its different versions and processing scripts to advance the field (available at \url{https://github.com/GAIR-NLP/MathPile/}).

\end{abstract}

\section{Introduction}

High-quality, diverse pre-training corpora form the cornerstone for developing powerful foundation models, enabling AI assistants like ChatGPT~\citep{openai-chatgpt} to exhibit balanced competencies across a broad spectrum of tasks~\citep{DBLP:journals/corr/abs-2303-12712-sparks-agi}. In this work, our concern centers on mathematical reasoning \begin{wrapfigure}{r}{5.7cm}
\centering 
\includegraphics[width=0.39\textwidth]{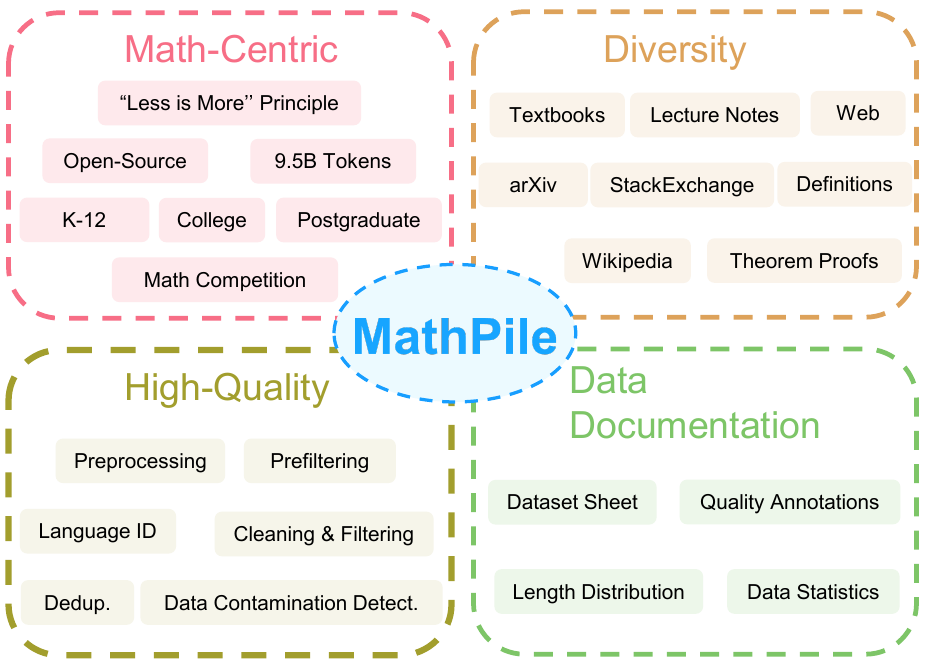} 
\caption{Key features of \mathpile.} 
\label{fig:math-pile-characteristics}
\end{wrapfigure}capabilities within foundational language models~\citeg{abel,DBLP:journals/corr/abs-2310-10631-proofpile-2}, for which can potentially boost the application in education tools, automated problem solving, data analysis, etc., thereby improving user experience. To facilitate this, we are not directly building a model, but rather focusing on a more fundamental aspect: \emph{creating a high-quality and diverse pre-training corpus tailored for the math domain}, namely \mathpile. Specifically, our work is significantly different from the previous work in the following characteristics (cf. Table~\ref{tab:math-corpora-comparison} for comparison):

\textbf{Math-centric}. Previous open-sourced pre-training corpora have typically focused on general domains, such as Pile~\citep{DBLP:journals/corr/abs-2101-00027-pile}, RedPajama~\citep{RedPajama} and Dolma~\citep{allenai-dolma}. Others have concentrated on multilingual aspects or programming languages, such as ROOTS~\citep{DBLP:conf/nips/LaurenconSWAMSW22-roots} and The Stack~\citep{DBLP:journals/corr/abs-2211-15533-stack}, respectively. However, a notable absence in these offerings is a corpus specificlly tailoring for mathematics. While there exist some corpora designed for training or continually improving math-specific language models, such as Minerva's mathematical training dataset~\citep{DBLP:conf/nips/LewkowyczADDMRS22-minerva} and  OpenAI's MathMix~\citep{DBLP:journals/corr/abs-2305-20050-lets-verify-step-by-step}, these are not open-sourced. Note that a recent work concurrent with ours, OpenWebMath~\citep{DBLP:journals/corr/abs-2310-06786-openwebmath}, although math-centric, is solely sourced from web pages. We will discuss the comparison with it later. 
Recognizing this gap, we aim to democratize access to high-quality mathematical corpus, 
fostering inclusive advancements in language models' mathematical reasoning.

\textbf{Diversity}. While \citet{DBLP:conf/nips/HendrycksBKABTS21-math} introduced AMPS, a problem set ranging from elementary mathematics to multivariable calculus (K-12 level) for pre-training purposes, it lacks content at the college-level and more challenging competition-level mathematics, focusing instead on a supervised dataset rather than an extensive corpus. The ProofPile corpus, introduced by~ \citet{DBLP:journals/corr/abs-2302-12433-proofpile}, aims to improve autoformalization and formal proving capabilities in models, yet its scope is confined to formal proving, not covering the broader mathematical domain from K-12 to postgraduate level. Concurrently with our work, \citet{DBLP:journals/corr/abs-2310-06786-openwebmath}
propose the OpenWebMath corpus, featuring a corpus composed of mathematical web pages. However, our corpus goes beyond web pages, integrating high-quality mathematics textbooks, lecture notes, scientific papers from arXiv in the field of mathematics, and carefully selected content from StackExchange, ProofWiki, and Wikipedia  among others, which positions our corpus as a richer and more diverse mathematical resource for language models.

\textbf{High-Quality}. Recent studies have increasingly highlighted the detrimental effects of low-quality and repeated content in pre-training corpora on model training, as evidenced in various works~\citep{10.1145/3359591.3359735-code-duplication,luccioni-viviano-2021-whats,lee-etal-2022-deduplicating,DBLP:journals/corr/abs-2205-10487-scaling-law-for-training-on-repeated-data,DBLP:journals/corr/abs-2305-13169-a-pretrainer-data-guide}. The importance of high-quality datasets has thus come to the fore. It has been shown that properly filtered and deduplicated web data can yield models as equally powerful as those trained on curated, high-quality corpora~\citep{DBLP:journals/corr/abs-2306-01116-refinedweb}. This similar practice has been recently adopted in several notable studies~\citep{SlimPajama,allenai-dolma,RedPajama-v2}. 
A notable example is the 1.3 billion-parameter code-focused model pre-trained on synthetically generated textbooks and filtered web pages, a project that broke existing scaling laws although did not open source its data~\citep{DBLP:journals/corr/abs-2306-11644-textbooks-are-all-you-need}. It's important to emphasize that quality of the corpus is far more significant than its quantity. For instance, OpenAI's MathMix comprises only 1.5 billion tokens. In this work, we diligently adhere to the principle of \emph{less is more}, as outlined in \citet{DBLP:journals/corr/abs-2305-11206-lima}. To achieve a high-quality corpus, 
Unlike other approaches that uniformly process all data, we have conducted specialized preprocessing and prefiltering for each data source before global data processing (including language identification, filtering, cleaning, and deduplication). We're dedicated to refining and optimizing our corpus, making a distinctive contribution to the field.

\textbf{Data Documentation}. Auditing large-scale pre-training corpora is essential for identifying content characteristics, intended uses, and potential biases, despite challenges due to their size~\citep{bender-friedman-2018-data,10.1145/3458723-datasheets-for-datasets,10.1145/3594737-data-statements}. However, many such corpora are released without proper documentation~\citep{DBLP:journals/corr/abs-2212-05129-measuring-data}. Recent audits of certain pre-training corpora have uncovered issues such as irrelevant content~\citep{luccioni-viviano-2021-whats,kreutzer-etal-2022-quality,DBLP:journals/corr/abs-2310-20707-what-in-my-big-data}, copyright infringement~\citep{DBLP:journals/corr/abs-2105-05241-datasheet-for-bookcorpus}, and inclusion of test sets for downstream tasks~\citep{10.1145/3359591.3359735-code-duplication,dodge-etal-2021-documenting}, highlighting the need for detailed data sheets and transparent documentation. To this end, following previous efforts to enhance corpora transparency,  we have provided a dataset sheet for our \mathpile (see Table~\ref{tab:mathpile-datasheet}). Throughout our extensive data processing workflow, numerous documents were annotated for quality, such as language identification scores and the ratio of symbols to words (as exemplified in Figure~\ref{fig:cleaned-example-doc-with-quality-annotation}). These quality annotations enable future users to apply their specific filters based on these scores. Additionally, we have conducted extensive deduplication for this corpus and performed data contamination detection with downstream benchmark test sets, removing any duplicated samples identified (cf. \S~\ref{sec:data-contamination-detection}). Interestingly, we have also discovered a significant number of questions from downstream test sets in OpenWebMath (cf. \S~\ref{sec:data-contamination-detection}). This underscores the importance of meticulous data documentation. We plan to release different versions to facilitate future use. See Appendix~\ref{appendix-sec:mathpile-example} for examples.

Additionally, we conducted continual pre-training experiments on \mathpile and found that it generally enhances the performance of language models across various mathematical reasoning benchmarks, with an average improvement of up to 5\% (cf. \S~\ref{sec:continual-pretrain-exp}). In conclusion, we hope to facilitate the growth of the field of AI for mathematics by contributing this specialized, high-quality, diverse corpus focused on the mathematical domain while maintaining utmost transparency about the data for practitioners. 

\section{The Collection of Corpora}
\label{sec:data-collection}

In order to construct \mathpile, we gather data from a variety of sources, which also includes a component of manual collection. We provide an ethics statement regarding copyright in Appendix~\ref{appendix-sec:ethics-statement}.

\begin{table*}[h]
\caption{The comparison of \mathpile with other mathematical Corpora, where PS denotes the problem set type. For non-open-sourced corpora, details are inferred from literature, with unknowns marked as ``?''. Token counts may vary by tokenizer; we use statistics from each dataset's report and the GPTNeoX-20B tokenizer~\citep{black-etal-2022-gpt} for our corpus. DM-Mathematics is from~\citet{DBLP:conf/iclr/SaxtonGHK19-deepmind-mathematics}. "Minerva" refers to its dataset. ProofPile-2~\citep{DBLP:journals/corr/abs-2310-10631-proofpile-2}, encompassing OpenWebMath and others, is excluded from this comparison.}
\label{tab:math-corpora-comparison}
\resizebox{\textwidth}{!}{%
\begin{tabular}{lm{1cm}<{\centering} ccc  m{2cm}<{\centering} m{2.5cm}<{\centering} c m{5.5cm}<{\centering}}
\toprule
\textbf{Datasets} &
  \textbf{Open Source} &
  \textbf{Type} &
  \textbf{Target Domain} &
  \textbf{\# Textbooks} &
  \textbf{Has Synth. Data} &
  \textbf{Data Contam. Detection} &
  \textbf{\# Tokens } &
  \textbf{Source} \\ \midrule

  
Minerva       & \textcolor{black}{\ding{55}} & Corpus      & General Math     & \textcolor{black}{\ding{55}}      & \textcolor{black}{\ding{55}} & \textcolor{black}{\ding{51}} & 38.5B  & arXiv, Web         \\ \midrule              

MathMix         & \textcolor{black}{\ding{55}} & Corpus + PS & General Math     & \textcolor{black}{\textbf{?}}       & \textcolor{black}{\ding{51}} & \textcolor{black}{\ding{51}} & \textcolor{white}{0}1.5B       & \textcolor{black}{\textbf{?}}   \\ \midrule          

ProofPile       & \textcolor{black}{\ding{51}} & Corpus      & Theorem Proving  & 7      & \textcolor{black}{\ding{55}} & \textcolor{black}{\ding{55}} & \textcolor{white}{0}8.3B    & arXiv, Textbooks, Lib., StackExchange, ProofWiki, MATH \\ \midrule 

OpenWebMath     & \textcolor{black}{\ding{51}} & Corpus      & General Math     & \textcolor{black}{\ding{55}}         & \textcolor{black}{\ding{55}} & \textcolor{black}{\ding{55}} & 14.7B  & Web \\ \midrule


DM-Mathematics  & \textcolor{black}{\ding{51}} & PS          & Math Competition & \textcolor{black}{\ding{55}}        & \textcolor{black}{\ding{51}} & \textbf{-} & \textcolor{white}{0}4.4B   & Synthesis \\ \midrule


AMPS   & \textcolor{black}{\ding{51}} & PS          & Math Competition & \textcolor{black}{\ding{55}}        & \textcolor{black}{\ding{51}} & \textcolor{black}{\ding{55}} & \textcolor{white}{0}0.7B  & Khan Academy, Synthesis  \\ \midrule 
 
\mathpile(Ours)  & \textcolor{black}{\ding{51}} & Corpus      & General Math     & 3,979 & \textcolor{black}{\ding{51}} & \textcolor{black}{\ding{51}} & \textcolor{white}{0}9.5B    & arXiv, Textbooks, StackExchange, Wikipedia, ProofWiki, Web  \\ \bottomrule

\end{tabular}%
}

\end{table*}

\noindent\textbf{Mathematical Textbooks} \quad Textbooks, covering mathematical concepts, exercises, and solutions, are valuable for \textit{educational purposes} for both humans and machines. Recent studies, even though not focused on math, support this view with synthesized textbooks~\citep{DBLP:journals/corr/abs-2306-11644-textbooks-are-all-you-need,DBLP:journals/corr/abs-2309-05463-phi-1.5}. To collect these genuine and high-quality textbooks, we began by conducting extensive manual searches across the internet, seeking open-source and freely accessible mathematics-related textbook websites. Afterwards, we proceeded to download these PDF files, resulting in a collection of 38 K-12 level textbooks, along with 369 college-level mathematics textbooks that cover a wide range of subjects including linear algebra, probability theory, calculus, and optimization. In addition to these textbooks, we also included  467 college course handouts and lecture notes, which tend to be more concise compared to full-length textbooks. Subsequently, we employed the Mathpix API\footnote{\url{https://mathpix.com/ocr}} to parse the PDFs into markdown format. Then, we meticulously cleaned up extraneous elements such as parsed image URLs, preface sections, table of contents, acknowledge sections, index sections, and consecutive empty lines within the parsed content, resulting in a  total of 874 documents.

We also refined high-quality mathematics-related synthetic textbooks from OpenPhi Project.\footnote{\url{https://huggingface.co/open-phi}} It is an open-source counterpart to the Phi work~\citep{DBLP:journals/corr/abs-2306-11644-textbooks-are-all-you-need}. While the underlying model and generation process differ, the output encompasses a broad spectrum of subjects, extending beyond programming. To isolate mathematics-related documents, we employed a straightforward criterion: the presence of the symbol ``\texttt{\$\$}'' combined with common mathematical expressions like ``\verb|\|\texttt{\{mathbf}'' and  ``\verb|\|\texttt{\{frac}''.  While ``\texttt{\$\$}'' alone is not always reliable, combining it with these symbols improves accuracy based on manual verification. This approach yielded 3,889 documents from an initial pool of 124,493. As the volume of pre-training data escalates, the synthesis of high-quality data becomes increasingly crucial. More advanced filtering methods and mathematical corpora synthesis are left for future exploration.

\noindent\textbf{Mathematical Papers from ArXiv} \quad ArXiv offers a free distribution service and serves as an open-source archive housing  millions of scientific papers. It also provides  invaluable training data for numerous powerful language models~\citeg{DBLP:journals/corr/abs-2302-13971-llama,RedPajama}. In our endeavor to collect mathematical papers from ArXiv, we identify 50 sub-subjects spanning Mathematics, Computer Science, Statistics, Physics, Quantitative Finance and Economics. Our process involved filtering ArXiv's metadata\footnote{\url{https://www.kaggle.com/datasets/Cornell-University/arxiv}} to focus on the chosen subjects (cf. Table~\ref{tab:arxiv-subject-list}), followed by accessing the source LaTex files (if available). We exclusively retained the LaTex files and consolidated multiple files based on their respective order as indicated by commands such as ``\texttt{include}'' and ``\texttt{input}'' within the main LaTex file of each paper. Subsequently, we undertook extensive transformations to enhance data clarity and consistency, including removing comments, reverting macros, omitting figures but keeping captions, excluding acknowledgements and references, condensing empty lines, replacing some formatting commands, substituting titles, and preserving only the main body content (cf. \S~\ref{appendix-sec:data-collection-details} for more details). Finally, we compiled 347,945 meticulously cleaned LaTex documents (around 8.5 billion tokens), with each document corresponding to a single paper.

\noindent\textbf{Mathematical Entries in Wikipedia} \quad Wikipedia is one the largest and most popular free online encyclopedias, offering information on a wide range of topics, including history, science, technology, culture, and more. This extensive knowledge has proven to be highly beneficial for numerous natural language processing tasks~\citeg{DBLP:conf/nips/LewisPPPKGKLYR020-rag} and pre-trained language models~\citeg{devlin-etal-2019-bert,DBLP:journals/corr/abs-2302-13971-llama}. To collect mathematical entries from Wikipedia, we downloaded the mathematics-focused (without pictures) dump of Wikipedia in English for the month of August 2023. We extracted the HTML documents from the dump using the library \texttt{libzim}, resulting in approximately 106,900 documents. 
Subsequently, we converted these HTML documents into markdown format using the  \texttt{html2text} library\footnote{We later found that the \texttt{html2text} library resulted in the LaTeX display issue in the cleaned documents (cf. Figure~\ref{fig:latex-display-issue-case}). Switching to another library \texttt{Resiliparse} with DOM parsing resolved this issue, ensuring correct LaTeX display.} while removing the hyperlinks following the practice of LLaMA~\citep{DBLP:journals/corr/abs-2302-13971-llama}. We retained the alternative text content but excluded image (often in SVG format) paths. Additionally, we eliminated extra newlines within paragraphs and condensed more than three consecutive empty lines to two using regular expressions. Further refinement involved the removal of boilerplate content at the bottom of the pages, typically denoted with phrases like ``\texttt{This article is issued from Wikipedia. The text is ...}''. In the end, our efforts yielded a collection of 106,881 mathematical Wikipedia entries, about 0.8 billion tokens.

\noindent\textbf{Entries from ProofWiki} \quad ProofWiki, an online compendium of mathematical proofs, has been instrumental in advancing the fields of autoformalization and formal proof proving, as evidenced by NaturalProofs~\citep{DBLP:conf/nips/Welleck0BHCCC21-naturalproofs} and ProofPile. We sourced data from the ProofWiki dump dated April 9, 2022 (provided by the Internet Archive), mirroring the preprocessing approach employed by NaturalProofs, which was based on the version from November 12, 2020. Specifically, this involved leveraging the \texttt{BeautifulSoup} library to parse all wiki pages followed by the extraction of raw text content using the \texttt{wikitextparser} library. This process yielded a substantial collection of mathematical content, totaling about 7.6 million tokens, comprising 10,328 definitions and 13,511 theorem-proof pairs. To facilitate better data organization, we formatted the definitions using the ``\texttt{definition}'' environment, and the theorem-proof pairs within the ``\texttt{section}'' environment with their respective titles serving as the section headings, similar to  ProofPile.

\noindent\textbf{Mathematical Discussions on StackExchange} \quad StackExchange, renowned for its network of community-powered question-and-answering websites, spans a wide array of topics, each concentrated on a particular topic. Its high-quality data trove has significantly contributed to the development of various language models~\citeg{DBLP:journals/corr/abs-2302-13971-llama,DBLP:journals/corr/abs-2305-11206-lima}. In our study, we identify eleven sites within this network, including five dedicated to mathematics (such as Mathematics and MathOverflow) and six others in closely related fields like Physics (cf. Table~\ref{tab:stackexchange-site-list}). Our data collection process began with downloading the site dumps from August 2023 (provided by the Internet Archive). We only retained the essential components in the posts, namely questions and answers (also associated meta information). To convert HTML documents to raw text, we utilized the \texttt{BeautifulSoup} library, coupled with a meticulous removal of invalid XML characters. We then systematically paired questions and their respective answers. Each question typically garners multiple responses, each with its own score and in some cases, an endorsement as the accepted answer by the questioner.  To guarantee quality, we applied a quality threshold (i.e., 5) for filtering. Questions underwent filtering based on the threshold, whereas answers were assessed by either the threshold or the score of the accepted answer, whichever was lower. Unanswered questions scoring at least 10 were preserved for potential future use. This rigorous process resulted in a rich collection of data, comprising 267,919 questions, 435,129 answers, and 3,418 unanswered questions, totaling about 254 million tokens.

\noindent\textbf{Mathematical Web Pages from Common Crawl} \quad Common Crawl, an archive of web data since 2007, is crucial for training advanced language models like GPT-3~\citep{DBLP:conf/nips/BrownMRSKDNSSAA20-gpt-3} and LLaMA. Our work targets extracting math web pages from SlimPajama~\citep{SlimPajama}, a cleaned and deduplicated version of RedPajama, focusing on its CommonCrawl and C4 subsets. Eschewing the common approach of using neutral network-based filtering, we opt for heuristic rule-based methods. Our procedure began with the creation of TF-IDF features, derived from our curated high-quality textbooks. During this process, we removed the stop words, limited the features to a maximum of 10,000, and employed white space tokenization. Upon the observation of the resulting vocabulary, we identified 11 commonly used LaTex commands, integral to mathematical expressions. We utilize these commands as a basis for a hard match within each document. A document is classified as mathematical if it contains any of these commands along with the symbol  ``\texttt{\$\$}'', typically indicative of a mathematical document. This rule-based approach, though simplistic, proved to be highly effective, especially given the vast size of the Common Crawl corpus
. We also experimented with more intricate dense embedding-based methods to identify mathematical documents, but these resulted in poor recall. Our efforts resulted in the compilation of a substantial collection of mathematical web pages: 4,307 documents from SlimPajama-C4 and 72,137 documents from SlimPajama-CommonCrawl, totaling approximately 633 million tokens. We acknowledge the potential for more efficient methods to sift mathematical documents from Common Crawl snapshots, an area we plan to explore in future work.

\begin{figure*}[h]
\centering 
\includegraphics[width=0.82\textwidth]{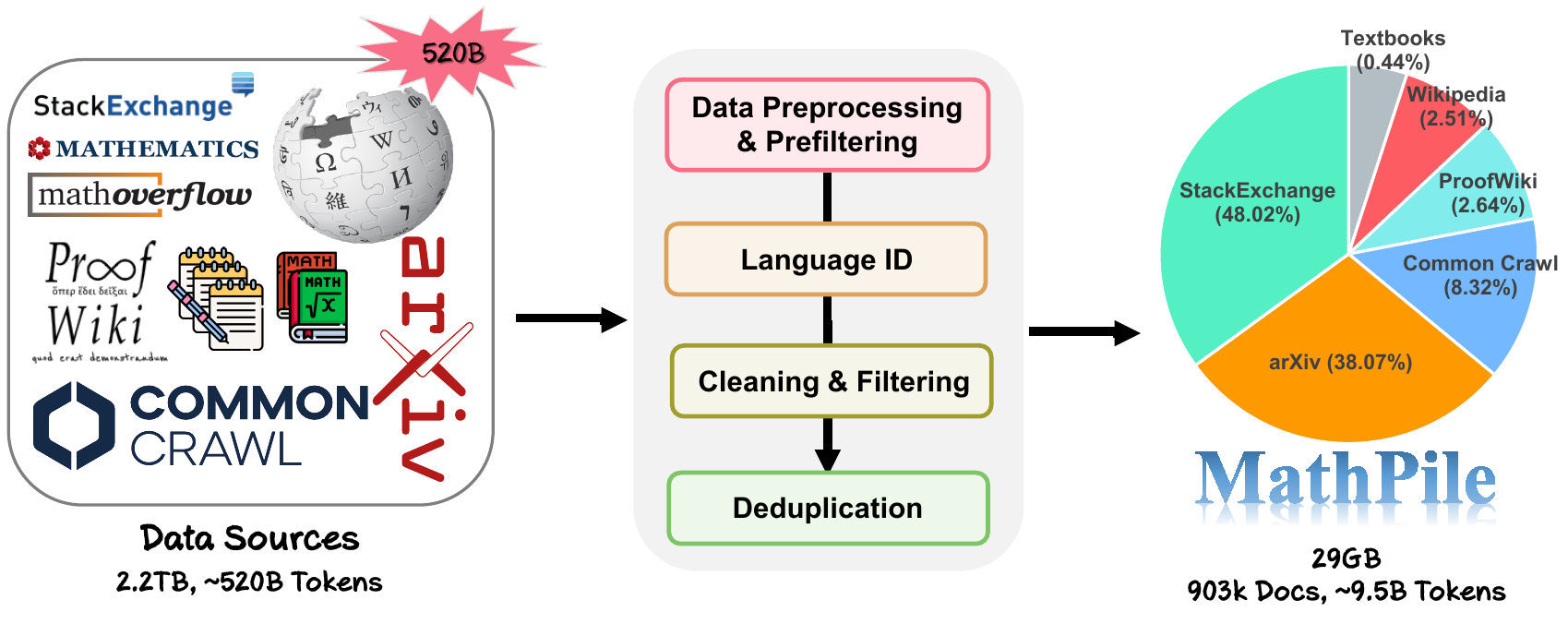} 
\caption{The creation process of \mathpile. We additionally perform data contamination detection on benchmark test sets (cf. \S~\ref{sec:data-contamination-detection}). We visualize its component ratios by document counts (Right).} 
\label{fig:math-pile}
\end{figure*}

\section{Global Data Processing}
\label{sec:global-data-processing}

After conducting specific data preprocessing for each data source during the data collection process, we globally engage in three critical steps: language identification,  filtering, and deduplication, to ensure the quality of the entire corpus, as shown in Figure~\ref{fig:math-pile}.

\subsection{Language Identification}

To filter non-English documents, we utilized the fastText language identifier, which was trained on Wikipedia, Tatoeba, and SETimes~\citep{joulin-etal-2017-fasttext,grave-etal-2018-learning-vector-157-languages}. A common practice is to classify a document as its respective language if the score exceeds 0.5, a threshold also employed by CCNet~\citep{wenzek-etal-2020-ccnet}. However, during the application of this practice, we encountered a considerable number of false positives—cases where documents were erroneously filtered as non-English when, in fact, they were written in English but contained a substantial amount of mathematical symbols. We attribute this issue to the domain gap between the fastText training datasets and the mathematical content. To enhance non-English document filtering, we set customized score thresholds for each data source. Specifically, Wikipedia and StackExchange thresholds were set at 0.1, arXiv at 0.3, and Common Crawl at 0.5. No thresholds were applied to ProofWiki and Textbooks due to manual verification ensuring English content. This refinement removed about 8,400 documents, totaling 231 million tokens.

\subsection{Data Cleaning and Filtering}

Despite thorough preprocessing, some documents, especially from sources like Wikipedia and Common Crawl, lack quality for language modeling due to brevity or automated content. Existing filtering methods~\citep{DBLP:journals/jmlr/RaffelSRLNMZLL20-T5,DBLP:journals/corr/abs-2112-11446-gopher,DBLP:journals/corr/abs-2305-13169-a-pretrainer-data-guide,DBLP:journals/corr/abs-2306-01116-refinedweb,SlimPajama}, while detailed, risk excluding valuable documents in our math-focused corpus if directly applying them as-is. To address this issue, we developed a unique set of cleaning and filtering heuristic rules, specifically crafted for the mathematical domain and drawing from past studies. These rules are aimed at removing meaningless lines (such as boilerplate content) and documents. Specifically, we (1)  detect lines containing ``lorem ipsum'' and filter them out if the resulting line is less than 5 characters; (2) detect lines containing ``javascript'' that also include  ``enable'', ``disable'' or ``browser'' and are under 200 characters, and filter them; (3) filter lines containing fewer than 10 words that include keywords like ``Log in'', ``sign-in'', ``read more...'', or ``items in cart.''; (4) filter documents if the ratio of uppercase words exceeds 40\%; (5) filter lines that end with ``...'' if they constitute more than 30\% of the entire document; (6)  filter documents if the ratio of non-alphabetic words surpasses 80\%; (7)  exclude documents with an average English word length outside the range of (3, 10); (8) discard documents that lack at least two common stop words such as ``the'', ``be'' ``to'' ``of'' ``and'' ``that'' or ``have''; (9)  filter out documents if the ratio of ellipses (...) to words exceeds 0.5 (e.g., progress bars); (10)  remove documents where 90\% of lines start with bullet points; (11)  filter documents including less than 200 characters after removing spaces and punctuation marks.

These meticulously crafted rules enabled us to curate a high-quality mathematical corpus. They also facilitated the assignment of quality annotations to each document from Wikipedia and Common Crawl. These annotations provide researchers and developers with the flexibility to filter the data according to their criteria, catering to specific needs (as shown in Figure~\ref{fig:cleaned-example-doc-with-quality-annotation}). This process resulted in filtering approximately 1,100 documents, removing 17 million tokens.

\subsection{Data Deduplication}

Given that our corpus originates from diverse sources, it is inevitable that there will be repetitions both within and across these sources. Deduplication is vital for training efficiency and reducing data memorization, addressing both exact and near-duplicates~\citep{lee-etal-2022-deduplicating}. We utilized the MinHash LSH algorithm~\citep{DBLP:conf/vldb/GionisIM99-LSH} built on the implementation of \texttt{text-dup}~\citep{chenghao_mou_2023_8364980_text-dedup} and \citet{lee-etal-2022-deduplicating}, to process large-scale corpora efficiently. Specifically, our process involved splitting each document using whitespace and constructing 5-grams, applying the ``\texttt{sha1}'' hash function, and configuring 450 buckets with 20 minhashes each, totaling 9,000 minhashes per document, as per RefinedWeb's guidelines~\citep{DBLP:journals/corr/abs-2306-01116-refinedweb}.

During the deduplication process within each source, we encountered numerous exact and near-duplicate documents across various sources: 304 in arXiv, 623 in Common Crawl, 83,716 in Wikipedia, 783 in textbooks (primarily synthetic), and 144 duplicate questions in StackExchange. Despite finding many near-duplicates in ProofWiki, they were differentiated as unique lemmas, proofs, or definitions, leading us to retain these entries (cf. Table~\ref{tab:dup-case-ProofWiki}). Manual review revealed significant duplication in Wikipedia due to collecting multiple historical document versions and in StackExchange from reposts across different forums (e.g., Math and MathOverflow) for broader visibility (cf. Table~\ref{tab:dup-case-StackExchange}). We provide near-duplicate examples from each data source in Table~\ref{tab:dup-case-CC}-\ref{tab:dup-case-StackExchange}. Cross-source deduplication revealed minimal overlap, with a single StackExchange question duplicated in Common Crawl, which was removed. This eliminated around 714 million tokens.

Note that we also experimented with using suffix arrays~\citep{DBLP:journals/siamcomp/ManberM93-suffix-arrays} to eliminate exact match sequences within documents. However, it tended to remove common phrases like ``Questions: ''. While it can effectively remove some templated content, it also disrupts the contextual integrity of our corpus. Consequently, we decided against employing this in order to preserve the context of our data.

\subsection{Data Contamination Detection}
\label{sec:data-contamination-detection}

As pre-training corpora grow, encountering data contamination becomes inevitable, where evaluation examples are found in the training set. Traditionally, post-hoc analysis, employing n-gram overlap, assesses contamination levels (e.g., GPT-2~\citep{radford2019gpt-2}, GPT-3~\citep{DBLP:conf/nips/BrownMRSKDNSSAA20-gpt-3}, FLAN~\citep{DBLP:conf/iclr/WeiBZGYLDDL22-FLAN}, LLaMA-2~\citep{DBLP:journals/corr/abs-2307-09288-llama-2}).  We advocate for early contamination detection during dataset creation to prevent irreversible damage as delaying exacerbates issues (c.f., previous study~\citep{DBLP:journals/corr/abs-2211-15533-stack}). Here, we utilize popular mathematical reasoning benchmarks, namely GSM8K~\citep{DBLP:journals/corr/abs-2110-14168-GSM8K}, MATH~\citep{DBLP:conf/nips/HendrycksBKABTS21-math}, MMLU-STEM~\citep{DBLP:conf/iclr/HendrycksBBZMSS21-MMLU}, AGIEval-SAT-MATH~\citep{DBLP:journals/corr/abs-2304-06364-agieval}, MathQA~\citep{DBLP:conf/naacl/AminiGLKCH19-mathqa} and AQuA~\citep{ling-etal-2017-program-aqua} to detect data contamination.


\begin{wraptable}{r}{5.5cm}
\caption{Benchmark test set occurrences in pre-training corpora, with numbers representing minimum occurrences, given potential undetected duplicates.
}
\label{tab:data-contamination-stat}
\scalebox{0.8}{
\begin{tabular}{c|ccc}
\toprule
\textbf{Corpus} &   \textbf{MATH} & \textbf{MMLU-STEM} \\ \midrule
Ours            &              \textcolor{white}{0}23            & \textcolor{white}{0}2                  \\
OpenWebMath     &         195           & 65                 \\ \bottomrule
\end{tabular}}

\end{wraptable}
To detect data contamination, we aggregated questions and answers from benchmark tests into a reference set, considering only questions for MMLU, AGIEval, MathQA and AQuA due to its multiple-choice format. Intuitively, math problem solutions often involve diverse reasoning steps, making questions easier to detect for contamination in pre-training data due to their more fixed nature. We utilized line-level exact match detection, dividing documents into lines, hashing each with \texttt{MD5} (taking the first 64 bits and the line itself to form sets), and applied this to both our corpus and the test sets. If a test set line and its hash match exactly with our dataset, it's marked as contamination.

After our detection process, we found 23 questions from MATH and 2 from MMLU-STEM in our corpus (see Table~\ref{tab:data-contamination-stat}), with no accompanying answers. No contamination was detected in other benchmarks. These duplicates mainly originated from StackExchange, Textbooks, and Common Crawl (see Table~\ref{tab:data-contamination-case-in-textbooks} and Table~\ref{tab:data-contamination-case-in-commoncrawl} for examples). Notably, questions from AMC mathematics competition books, also used in the MATH benchmark, were identified in Textbooks. We extended our analysis to OpenWebMath, uncovering more duplicate questions from MATH and MMLU (cf. Table~\ref{tab:data-contamination-case-in-openwebmath}), although many were repeats. This aligns with similar findings by \citet{DBLP:journals/corr/abs-2310-10631-proofpile-2}. These instances highlight the importance of vigilance in creating pre-training corpora to avoid undermining downstream benchmarks. We removed all detected exact matches to mitigate data contamination, resulting in \mathpile corpus.

\section{Data Analysis}

\subsection{Statistics}

\begin{table*}[ht]
\centering
\caption{The components and data statistics of \mathpile.}
\label{tab:mathpile-count-stat}
\scalebox{0.8}{
\begin{tabular}{c|cccccc}
\toprule
\textbf{Components} & \textbf{Size (MB)} & \textbf{\# Documents} & \textbf{\# Tokens} & \textbf{max(\# Tokens)} & \textbf{min (\# Tokens)} & \textbf{ave (\# Tokens)}  \\ \midrule
Textbooks     &  \textcolor{white}{00}644     & \textcolor{white}{00}3,979   & \textcolor{white}{0}187,194,060   & 1,634,015 & 256 & 47,046 \\
Wikipedia     & \textcolor{white}{00}274     & \textcolor{white}{0}22,795 & \textcolor{white}{00}59,990,005    & \textcolor{white}{0}109,282   & \textcolor{white}{0}56   & \textcolor{white}{0}2,632 \\
ProofWiki     & \textcolor{white}{000}23      & \textcolor{white}{0}23,839  & \textcolor{white}{000}7,608,526     & \textcolor{white}{000}6,762     & \textcolor{white}{0}25   & \textcolor{white}{00}319  \\
CommonCrawl   & \textcolor{white}{0}2,560    & \textcolor{white}{0}75,142  & \textcolor{white}{0}615,371,126   & \textcolor{white}{0}367,558   & \textcolor{white}{0}57  & \textcolor{white}{0}8,189 \\
StackExchange & \textcolor{white}{0}1,331  & 433,751 & \textcolor{white}{0}253,021,062   & \textcolor{white}{0}125,475   & \textcolor{white}{0}28  & \textcolor{white}{00}583 \\
arXiv         & 24,576   & 343,830 & 8,324,324,917 & 4,156,454 & \textcolor{white}{0}20  & 24,211 \\ \midrule
Total         & 29,408 & 903,336 &  9,447,509,696  & -         & -    & 10,458  \\ \bottomrule
\end{tabular}%
}

\end{table*}

\begin{wrapfigure}{r}{9cm}
\centering 
\includegraphics[width=0.62\textwidth]{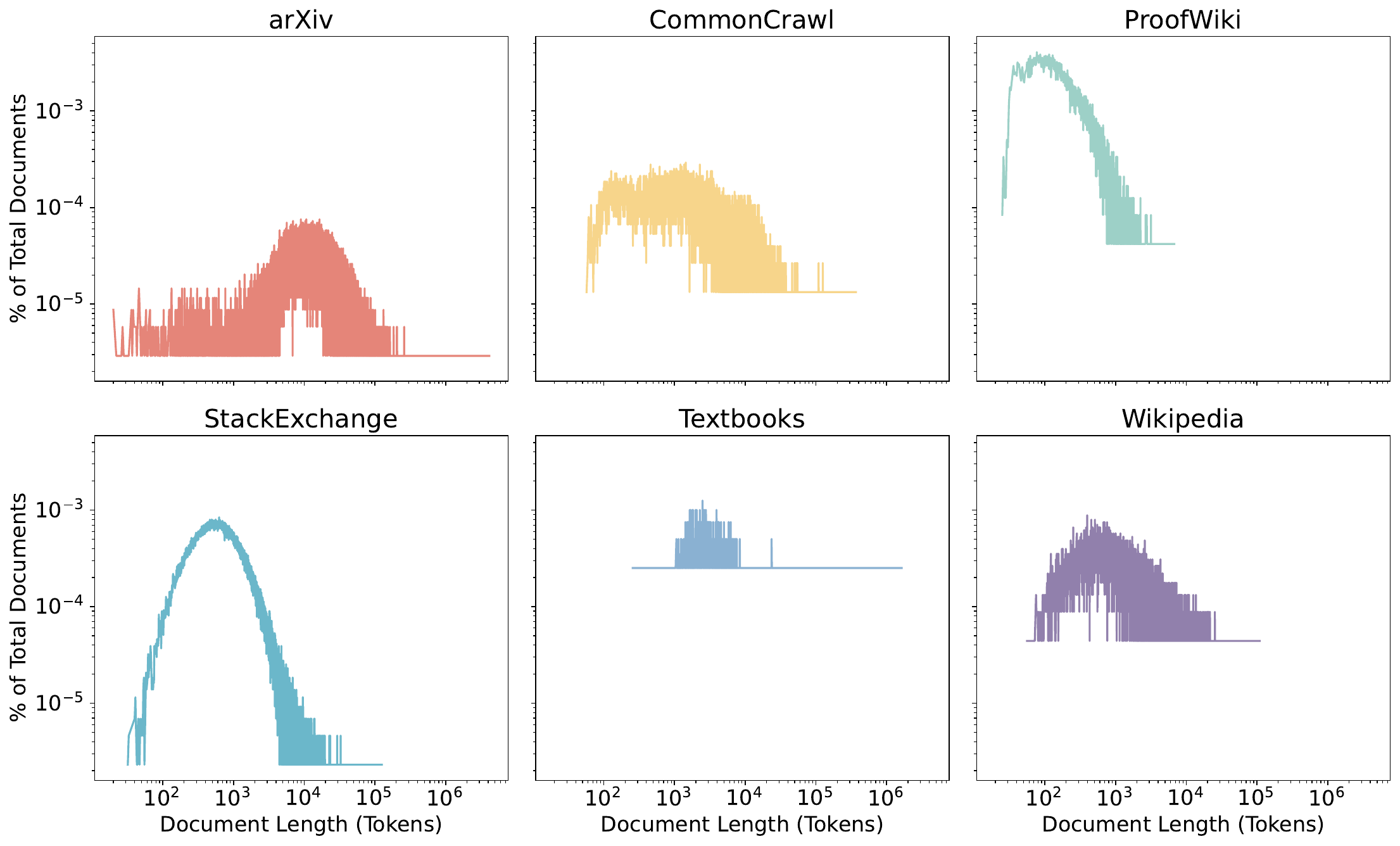} 
\caption{Document length distribution (log-scale).} 
\label{fig:length-dist}
\end{wrapfigure}
We present detailed statistical information for each component of \mathpile in Table~\ref{tab:mathpile-count-stat}, such as the number of documents and the count of tokens. Following our meticulous and comprehensive data collection and processing process, we obtain 29GB of high-quality and diverse math-centric corpus, encompassing around 9.5 billion tokens, from an initial volume of 2.2TB of raw data (cf. Figure~\ref{fig:math-pile}). Compositionally, arXiv constitutes the largest portion of \mathpile, while Textbooks represent the smallest share but are of exceptionally high quality.

We analyze the document length (in terms of token numbers) and their respective proportions from each source within \mathpile, which is visualized in Figure~\ref{fig:length-dist}. Intuitively, if the data from each source contains a higher amount of near-duplicates or machine-generated content, the distribution of documents of similar lengths becomes more prevalent, leading to a less smooth distribution curve. Figure~\ref{fig:length-dist} shows that, thanks to our thorough and rigorous processing, the document length distribution in \mathpile is relatively smooth across different sources. Note that ProofWiki, due to its fixed format of definitions, lemmas, and proofs, naturally contains shorter content, resulting in a distribution with many similar lengths. We can also observe that, on average, the documents from arXiv and Textbooks tend to be lengthier, while those from ProofWiki and StackExchange are generally shorter.

\subsection{Continual Pre-training Experiments}
\label{sec:continual-pretrain-exp}

We chose \texttt{Mistral-7B-v0.1}~\citep{DBLP:journals/corr/abs-2310-06825-mistral-7b} (the state-of-the-art open-source model at the time) for continual pre-training. We segmented packed text into chunks with a window size of 4,096 and continued pre-training for 3 epochs with a global batch size of 1024. We employ a cosine learning rate schedule with a maximum learning rate of 1e-5 and 1\% warmup steps. All experiments were conducted on NVIDIA A100 8*80GB GPUs. For evaluation, we employ a range of benchmarks - GSM8K, MATH, MMLU-MATH, AGIEval-SAT-MATH, MathQA, AQuA - to assess varying levels of mathematical reasoning abilities, comparing all models using the same few-shot prompting with greedy decoding.


\begin{wraptable}{r}{9.6cm}
\caption{Results on each subset of \mathpile and sampled OpenWebMath. The numbers in parentheses represent the number of tokens trained. \textbf{Bold} results denote improvements over the original Mistral.}
\label{tab:continual-pretrain-results}
\scalebox{0.61}{
\begin{tabular}{l| m{1cm}<{\centering}  m{1cm}<{\centering} m{1cm}<{\centering}  m{1.2cm}<{\centering}  m{1.2cm}<{\centering}  m{1.2cm}<{\centering}}
\toprule
\textbf{Models} & \textbf{GSM8K} & \textbf{MATH}  & \textbf{SAT-MATH} & \textbf{MMLU-Math} & \textbf{MathQA} & \textbf{AQuA}  \\ \midrule
Mistral-7B-v0.1        & 47.38          & 10.08          & 47.27          & 44.92          & 23.51          & 27.95          \\ \midrule
+ Textbooks  (0.56B)       & \textbf{48.97} & \textbf{12.10} & \textbf{56.36}    & \textbf{48.93}     & \textbf{30.38}  & \textbf{33.07} \\
+ Wikipedia (0.18B)    & \textbf{49.96} & \textcolor{white}{0}9.96           & \textbf{53.63} & \textbf{47.16} & \textbf{28.97} & \textbf{35.43} \\
+ StackExchange (0.87B) & 43.06          & \textbf{11.66} & 47.27          & 43.51          & \textbf{27.67} & \textbf{30.70} \\
+ Common Crawl (1.83B)  & 45.56          & \textcolor{white}{0}9.88           & \textbf{50.45} & \textbf{45.17} & \textbf{25.79} & \textbf{31.88} \\
+ arXiv (0.38B) & \textbf{47.91} & \textcolor{white}{0}7.50 & 42.72 & \textbf{46.34} & 18.05 & 27.55 \\
+ Textbooks, Wikipeida, StackEx., CC (4B) &  \textbf{49.88} &  \textbf{11.70} & 43.18 & 43.75 & 23.24 & 25.19 \\ 
\midrule 
+ AMPS (1B) & \textcolor{white}{0}0.08 & \textcolor{white}{0}0.82 & \textcolor{white}{0}3.18 & \textcolor{white}{0}0.47 & \textcolor{white}{0}10.99 & \textcolor{white}{0}8.27 \\
+ DM-Mathematics (5B) &	\textcolor{white}{0}0.00  &	\textcolor{white}{0}0.00  &	\textcolor{white}{0}0.00  &	\textcolor{white}{0}0.00  &	\textcolor{white}{0}0.00  &	\textcolor{white}{0}0.00  \\
+ Sampled OpenWebMath (0.59B) & 43.21 & \textcolor{white}{0}7.86 &  \textbf{47.72} & \textbf{47.52} &  21.80  & 24.80 \\ 
\bottomrule 
\end{tabular}
}
\end{wraptable}

\noindent\textbf{The Effectiveness of \mathpile} \quad We further pre-trained \texttt{Mistral-7B-v0.1} on several subsets, respectively.  As shown in Table~\ref{tab:continual-pretrain-results}, overall, continual pre-training on the subsets generally enhances performance across diverse math benchmarks, albeit to varying degrees. There are exceptions, such as the lack of improvement on GSM8K after training on StackExchange; we suspect this is due to community users rarely asking basic arithmetic questions on StackExchange. Continual pre-training on arXiv leds to a slight performance boost on GSM8K and MMLU-MATH, but a degradation on MATH, SAT-MATH, and MathQA. We attribute this performance degradation to the disparity between the math knowledge present in arXiv papers and that required for the downstream benchmarks. We also conducted pre-training on a collection of Textbooks, Wikipedia, StackExchange, and CC. Experimental results indicate improved performance on GSM8K and MATH, but not on other benchmarks.  Due to limited computational resources,\footnote{Pre-training 10 billion tokens for 1 epoch requires approximately 1,760 NVIDIA A100 GPU hours, making us keen to partner with well-resourced corporations to gain deeper insights in the future.} we did not extensively experiment with the entire dataset or combine data from \mathpile's subsets and existing general corpora, leaving these valuable aspects for future work. Note that we also report some evaluation results on general language benchmarks provided in Appendix~\ref{appendix-sec:general-benchmarks-eval}.

Furthermore, we also conducted continual pre-training on some existing corpora listed in Table~\ref{tab:math-corpora-comparison} for comparison, including AMPS, DM-Mathematics and a random subset of OpenWebMath, cleansed of data leakage, in volumes approximately equal to that of Textbooks. Surprisingly, pre-training directly with these synthetic datasets degraded model performance. We attribute this to the narrow, monotonous structure of AMPS and DM-Mathematics problem sets, making them unsuitable for standalone pre-training; such datasets generally yield better results when combined with broader corpora for pre-training~\citep{xu2024lemur}. Additionally, the OpenWebMath subset produced even less improvement than the same or smaller scale subsets of \mathpile, such as Textbooks and Wikipedia (cf. Table~\ref{tab:continual-pretrain-results}), likely due to a need for more tokens to show substantial gains. These results underscore the superior quality of our data.

\begin{table}[ht]
\caption{Ablation study on data processing pipeline and LaTeX display issue resolution}
\label{tab:ablation-data-processing-pipeline-exp}
\scalebox{0.63}{
\begin{tabular}{l| m{2cm}<{\centering}  m{2.2cm}<{\centering} m{1cm}<{\centering} m{1cm}<{\centering} m{1.2cm}<{\centering} m{1.3cm}<{\centering} m{1.2cm}<{\centering} m{1cm}<{\centering}}
\toprule
\textbf{Models} &
  \textbf{Global Data Processing} &
  \textbf{Fix Latex Display Issue} &
  \textbf{GSM8K} &
  \textbf{MATH} &
  \textbf{SAT-MATH} &
  \textbf{MMLU-MATH} &
  \textbf{MathQA} &
  \textbf{AQuA} \\ \midrule
Mistral-v0.1-7B        & - & - & 47.38          & 10.08 & 47.27          & 44.92          & 23.51          & 27.95          \\ \midrule
+ Sampled raw Wikipedia (0.55B)              & \ding{55}                & \ding{55}                & 41.92          & \textcolor{white}{0}6.28  & 20.90           & 23.70           & 24.72          & 24.01      \\
+ Full raw Wikipedia (2.18B) & \ding{55} &  \ding{55} & 32.30 & \textcolor{white}{0}4.48  & 13.64 & 25.59 & 27.04 & 23.62 \\
+ Full cleaned but LaTeX issued Wikipedia  (0.23B)                  & \ding{51}                & \ding{55}               & 47.15          & \textcolor{white}{0}8.58  & 46.81          & 42.92          & 21.00             & 31.88          \\ \midrule
+ Full cleaned Wikipedia (0.18B)                    & \ding{51}                & \ding{51} & \textbf{49.96} & \textcolor{white}{0}9.96  & \textbf{53.63} & \textbf{47.16} & \textbf{28.97} & \textbf{35.43} \\ \bottomrule
\end{tabular}%
}

\end{table}

\noindent\textbf{The Effectiveness of Data Processing Pipeline} \quad We utilized the Wikipedia subset as a testbed to evaluate our data processing pipeline. We distinguished between raw Wikipedia, which is collected but not globally processed, and cleaned Wikipedia, which has undergone global data processing. Additionally, we performed an ablation study on LaTeX display issues in Wikipedia (cf. Figure~\ref{fig:latex-display-issue-case}), attributed to HTML-to-text conversion tools, by comparing documents with problematic and correct LaTeX displays. Following previous settings, we executed continual pre-training on these datasets. Results in Table~\ref{tab:ablation-data-processing-pipeline-exp} indicate that skipping our pipeline notably reduces Mistral's mathematical reasoning abilities, unaffected by increased training size (i.e., 2.18B). Furthermore, correct LaTeX display in documents is vital for enhancing reasoning capabilities, as shown by the last two rows of Table~\ref{tab:ablation-data-processing-pipeline-exp}. These findings underscore our pipeline's effectiveness and shed light on the superior importance of data quality over quantity, even in the continual pre-training phase.

\section{Related Work}

\noindent\textbf{Pre-training Corpora for Language Models} \quad In language modeling, early models like GPT~\citep{radford2018improving} and BERT~\citep{devlin-etal-2019-bert} are trained on resources such as Books~\citep{DBLP:conf/iccv/ZhuKZSUTF15-Books} and Wikipedia. Later models like GPT-2~\citep{radford2019gpt-2} and T5~\citep{DBLP:journals/jmlr/RaffelSRLNMZLL20-T5} expand training to include web data from Reddit (WebText) and Common Crawl (C4). GPT-3~\citep{DBLP:conf/nips/BrownMRSKDNSSAA20-gpt-3} enlarges its corpus to 300 billion tokens, combining Common Crawl, WebText, Books, and Wikipedia. Pile \citep{DBLP:journals/corr/abs-2101-00027-pile} introduces a diverse collection of 22 datasets for large-scale pre-training. The Gopher project~\citep{DBLP:journals/corr/abs-2112-11446-gopher} compiles a 10.5TB corpus, and PaLM~\citep{DBLP:journals/jmlr/ChowdheryNDBMRBCSGSSTMRBTSPRDHPBAI23-palm} is built from a 780 billion-token corpus, both closed-source. BLOOM~\citep{DBLP:journals/corr/abs-2211-05100-bloom} uses the ROOTS dataset~\citep{DBLP:conf/nips/LaurenconSWAMSW22-roots} for multilingual pre-training. The Stack~\citet{DBLP:journals/corr/abs-2211-15533-stack} provides a 3.1 TB code dataset. LLaMA~\citep{DBLP:journals/corr/abs-2302-13971-llama} utilizes various data sources but doesn't release its corpus, unlike RedPajama~\citep{RedPajama} and its de-duplicated version SlimPajama~\citep{SlimPajama}. RefinedWeb shows web-only corpora can rival curated ones~\citep{DBLP:journals/corr/abs-2306-01116-refinedweb}. Recent models like GPT-4~\citep{DBLP:journals/corr/abs-2303-08774-gpt-4}, Mistral-7B~\citep{DBLP:journals/corr/abs-2310-06825-mistral-7b} and the lastest Gemini~\citep{team2023gemini} have refrained from open-sourcing data. Constructing diverse, high-quality pre-training corpora is crucial for narrowing the performance gap with closed-source models, reflecting our work's aim.

\noindent\textbf{Pre-training Benchmarks and Corpora for Mathematical Reasoning} \quad The challenge of endowing models with human-like mathematical reasoning has attracted significant interest from the machine learning and natural language processing communities. To evaluate models' mathematical capabilities, several benchmark datasets have been developed, including AQuA~\citep{ling-etal-2017-program}, DM-Mathematics~\citep{DBLP:conf/iclr/SaxtonGHK19-deepmind-mathematics}, SVAMP~\citep{patel-etal-2021-nlp},  GSM8K~\citep{DBLP:journals/corr/abs-2110-14168-GSM8K}, and MATH~\citep{DBLP:conf/nips/HendrycksBKABTS21-math}, which cover a range of complexities from basic arithmetic to competition-level mathematics. Additionally, benchmarks like NaturalProofs~\citep{DBLP:conf/nips/Welleck0BHCCC21-naturalproofs} focus on theorem-proving capabilities, while the STEM subset of MMLU~\citep{DBLP:conf/iclr/HendrycksBBZMSS21-MMLU} evaluates understanding across multiple tasks in science, technology, engineering, and mathematics. To improve models' mathematical reasoning, pre-training corpora like AMPS~\citep{DBLP:conf/nips/HendrycksBKABTS21-math} (despite a large-scale synthetic exercise set), ProofPile~\citep{DBLP:journals/corr/abs-2302-12433-proofpile}, and OpenWebMath~\citep{DBLP:journals/corr/abs-2310-06786-openwebmath} have been introduced, targeting various levels of mathematical problem-solving and theorem proving. Unlike Google's Minerva~\citep{DBLP:conf/nips/LewkowyczADDMRS22-minerva} and OpenAI's MathMix~\citep{DBLP:journals/corr/abs-2305-20050-lets-verify-step-by-step}, which are not public, our work focuses on creating a high-quality and diverse mathematical corpus from diverse sources to fill existing gaps.

\section{Conclusion and Limitations}

In this work, we present \mathpile, a specialized corpus centered around mathematics, characterized by its diversity and high quality. Throughout its development, we meticulously source and gather data, applying a rigorous and math-specific pipeline. This pipeline encompasses various stages such as preprocessing, prefiltering, language identification, cleaning and filtering, and deduplication, all aimed at maintaining the high quality of the corpus. We also conduct data contamination detection to remove duplicates from popular mathematical reasoning benchmark test sets, crucial for ensuring their integrity and effectiveness, an aspect often overlooked in other similar works. We aim for our \mathpile to enhance mathematical reasoning in language models, whether used alone or in conjunction with other datasets, to promote broader applications.

This dataset also has some limitations. Many detailed decisions in its creation were made empirically, which may not always be optimal, and verifying decisions directly can be challenging. Moreover, the data scale is insufficient for training extra-large models; subsets like the common crawl could be expanded. Furthermore, the dataset is focused primarily on English, highlighting the need to construct high-quality datasets for other languages. Future research could also explore data mixing~\citep{liu2024regmix} and model-based pre-training corpus refinement~\citep{yu2024mates,zhou2024programming} to enhance dataset quality and model performance.


\begin{ack}
  This work was partially funded by the National Natural Science Foundation of China (62476168), Shanghai Artificial Intelligence Laboratory.
\end{ack}





\bibliography{neurips_data_2024}
\bibliographystyle{ref_format}








\clearpage

\section*{Checklist}


\begin{enumerate}

\item For all authors...
\begin{enumerate}
  \item Do the main claims made in the abstract and introduction accurately reflect the paper's contributions and scope?
    \answerYes{}
  \item Did you describe the limitations of your work?
     \answerYes{}
  \item Did you discuss any potential negative societal impacts of your work?
   \answerNo{}
  \item Have you read the ethics review guidelines and ensured that your paper conforms to them?
    \answerYes{}
\end{enumerate}

\item If you are including theoretical results...
\begin{enumerate}
  \item Did you state the full set of assumptions of all theoretical results?
    \answerNA{}
	\item Did you include complete proofs of all theoretical results?
    \answerNA{}
\end{enumerate}

\item If you ran experiments (e.g. for benchmarks)...
\begin{enumerate}
  \item Did you include the code, data, and instructions needed to reproduce the main experimental results (either in the supplemental material or as a URL)?
    \answerYes{}
  \item Did you specify all the training details (e.g., data splits, hyperparameters, how they were chosen)?
     \answerYes{}
	\item Did you report error bars (e.g., with respect to the random seed after running experiments multiple times)?
     \answerNo{}
	\item Did you include the total amount of compute and the type of resources used (e.g., type of GPUs, internal cluster, or cloud provider)?
    \answerYes{}
\end{enumerate}

\item If you are using existing assets (e.g., code, data, models) or curating/releasing new assets...
\begin{enumerate}
  \item If your work uses existing assets, did you cite the creators?
    \answerYes{}
  \item Did you mention the license of the assets?
   \answerYes{}
  \item Did you include any new assets either in the supplemental material or as a URL?
    \answerNo{}
  \item Did you discuss whether and how consent was obtained from people whose data you're using/curating?
    \answerYes{}
  \item Did you discuss whether the data you are using/curating contains personally identifiable information or offensive content?
    \answerYes{}
\end{enumerate}

\item If you used crowdsourcing or conducted research with human subjects...
\begin{enumerate}
  \item Did you include the full text of instructions given to participants and screenshots, if applicable?
    \answerNA{}
  \item Did you describe any potential participant risks, with links to Institutional Review Board (IRB) approvals, if applicable?
    \answerNA{}
  \item Did you include the estimated hourly wage paid to participants and the total amount spent on participant compensation?
    \answerNA{}
\end{enumerate}

\end{enumerate}


\clearpage

\appendix

\part{}
\section*{\centering \LARGE{Appendix}}
\mtcsettitle{parttoc}{}
\parttoc

\clearpage

\section{\mathpile Datasheet}
\label{sec:appendix}

\begin{longtable}{p{3.5cm}|p{9cm}}

    \toprule
    \multicolumn{2}{c}{\textsc{\textbf{Motivation}}} \\
    \midrule
    \textbf{For what purpose was the dataset created?} & 
    Developed in a context where datasets like Google's Minerva and OpenAI's MathMix are not open-sourced, \mathpile aims to counter this trend by enriching the open-source community and enhancing mathematical language modeling with its (relatively) large-scale, math-centric, diverse, high-quality dataset. It can be used on its own or cooperated with general domain corpora like books, and Github code, to improve the reasoning abilities of language models. \\ \midrule
    
    \textbf{Who created the dataset and on behalf of which entity?} & \mathpile was created by the authors of this work. \\ \midrule
    
    \textbf{Who funded the creation of the dataset?} & The creation of \mathpile was funded by GAIR Lab, SJTU.  \\ \midrule
    
    \textbf{Any other comment?} & None. \\ \midrule
    \multicolumn{2}{c}{\textsc{\textbf{Composition}}} \\ \midrule
    \textbf{What do the instances that comprise the dataset represent?} & 
    \mathpile is comprised of text-only documents, encompassing a broad range of sources. These include academic papers from arXiv, educational materials such as textbooks and lecture notes, definitions, theorems and their proofs, informative articles from Wikipedia, interactive Q\&A content from StackExchange community users, and webpages sourced from Common Crawl. All these instances are math-focused. \\ \midrule
    \textbf{How many instances are there in total?} & \mathpile contains about 903 thousand of documents, or around 9.5 billion tokens. \\ \midrule
    \textbf{Does the dataset contain all possible instances or is it a sample (not necessarily random) of instances from a larger set?} & \mathpile is curated from a diverse array of sources, including arXiv, Textbooks, Wikipedia, StackExchange, ProofWiki, and Common Crawl. However, it doesn't encompass all instances from these sources. We have implemented a rigorous data processing pipeline, which involves steps like preprocessing, prefiltering, language identification, cleaning, filtering, and deduplication. This meticulous approach is taken to guarantee the high quality of the content within \mathpile. \\ \midrule
    \textbf{What data does each instance consist of?} & 
    Each instance in \mathpile is a text-only document, uniquely identified by its source, labeled under \texttt{Subset}. These instances are enriched with metadata, such as the score from language identification, the ratio of symbols to words, and their respective file paths. Note that instances from the StackExchange are composed of a question and its accompanying answers, each with their own set of meta data, including community users.
    To illustrate them, we provide specific examples for each source, ranging from Figure~\ref{fig:mathpile-case-cc} to Figure~\ref{fig:mathpile-case-stackexchange}. \\ \midrule
    \textbf{Is there a label or target associated with each instance?} & No. \\ \midrule
    \textbf{Is any information missing from individual instances?} & No. \\ \midrule
    \textbf{Are relationships between individual instances made explicit?} & No. \\ \midrule
    \textbf{Are there recommended data splits?} & No. \\ \midrule
    \textbf{Are there any errors, sources of noise, or redundancies in the dataset?} & Despite our rigorous efforts in cleaning, filtering out low-quality content, and deduplicating documents, it's important to acknowledge that a small fraction of documents in \mathpile might still fall short of our quality standards, particularly those sourced from web pages. \\ \midrule
    \textbf{Is the dataset self-contained, or does it link to or otherwise rely on external resources?} & Yes, \mathpile is self-contained. \\ \midrule
    \textbf{Does the dataset contain data that might be considered confidential?} & No. \\ \midrule
    \textbf{Does the dataset contain data that, if viewed directly, might be offensive, insulting, threatening, or might otherwise cause anxiety?} & 
    We do not expect offensive content despite our significant efforts in cleaning and filtering. But, we can not fully guarantee this. \\ \midrule
    \multicolumn{2}{c}{\textsc{\textbf{Collection}}} \\ \midrule
    \textbf{How was the data associated with each instance acquired?} & 
    Our data is primarily sourced from the arXiv website and the Internet Archive. The CommonCrawl data originates from SlimPajama. The textbooks included are manually collected, with quality checks performed on publicly available textbooks from various internet sources. \\ \midrule
    \textbf{What mechanisms or procedures were used to collect the data?} & 
    Refer to \S~\ref{sec:data-collection} for details on how they collect data. \\ \midrule
    \textbf{If the dataset is a sample from a larger set, what was the sampling strategy?} & We strive to use the most recent data dumps available and then selectively choose high-quality documents that are closely related to mathematics. \\ \midrule
    \textbf{Who was involved in the data collection process and how were they compensated?} & Authors from this paper were involved in collecting it and processing it. \\ \midrule
    \textbf{Over what timeframe was the data collected?} & \mathpile encompasses documents created between 2007 and August 2023. Note that some documents and textbooks included may be created in the previous century. \\ \midrule
    \textbf{Were any ethical review processes conducted?} & No. \\ \midrule
    \multicolumn{2}{c}{\textsc{\textbf{Preprocessing}}} \\ \midrule
    \textbf{Was any preprocessing/cleaning/labeling of the data done?} & Yes, during our data collection phase, we conducted extensive filtering and cleansing procedures, detailed in \S~\ref{sec:data-collection}. After the completion of data collection, we conducted further steps including language identification, additional cleaning and filtering, deduplication, and leakage detection in benchmark datasets. Subsequently, we removed any contaminated examples identified through this process. See \S~\ref{sec:global-data-processing} for details.
    \\ \midrule
    \textbf{Was the “raw” data saved in addition to the preprocessed/cleaned/labeled data?} & Yes. \\ \midrule
    \textbf{ Is the software that was used to preprocess/clean/label the data available?} & Yes, scripts are open-sourced at \url{https://github.com/GAIR-NLP/MathPile/tree/main/src} \\ \midrule
    \multicolumn{2}{c}{\textsc{\textbf{Uses}}} \\ \midrule
    \textbf{Has the dataset been used for any tasks already?} & Yes, this data has been used to develop mathematical language models. \\ \midrule
    \textbf{Is there a repository that links to any or all papers or systems that use the dataset?} & No. This dataset is currently utilized in the following research papers: (1) JiuZhang 3.0: Efficiently Improving Mathematical Reasoning by Training Small Data Synthesis Models. (2) Task Oriented In-Domain Data Augmentation. (3) Great Memory, Shallow Reasoning: Limits of $k$NN-LMs. (4) BAM! Just Like That: Simple and Efficient Parameter Upcycling for Mixture of Experts. (5) SciDFM: A Large Language Model with Mixture-of-Experts for Science. (6) MIND: Math Informed syNthetic Dialogues for Pretraining LLMs and so on. \\ \midrule
    \textbf{What (other) tasks could the dataset be used for?} & \mathpile was developed to enhance language modeling, offering significant benefits for a variety of mathematical reasoning tasks. \\ \midrule
    \textbf{Is there anything about the composition of the dataset or the way it was collected and preprocessed/cleaned/labeled that might impact future uses?} & Our cleaning and filtering processes, while thorough,  may not be entirely optimal, potentially leading to the exclusion of some valuable documents. Additionally, \mathpile is specifically tailored for English, which limits its applicability in multilingual contexts. \\ \midrule
    \textbf{Are there tasks for which the dataset should not be used?} & Any tasks which may considered irresponsible or harmful. \\ \midrule
    \multicolumn{2}{c}{\textsc{\textbf{Distribution}}} \\ \midrule
    \textbf{Will the dataset be distributed to third parties outside of the entity on behalf of which the dataset was created?} & Yes, \mathpile has been made available through the HuggingFace Hub (\url{https://huggingface.co/datasets/GAIR/MathPile}). \\ \midrule
    \textbf{How will the dataset will be distributed?} & \mathpile has been made available through the HuggingFace Hub (\url{https://huggingface.co/datasets/GAIR/MathPile}). \\ \midrule
    \textbf{When will the dataset be distributed?} & The \mathpile will be available after this paper is made public. \\ \midrule
    \textbf{Will the dataset be distributed under a copyright or other intellectual property (IP) license, and/or under applicable terms of use (ToU)?} & If the source data of \mathpile is governed by a license more restrictive than CC BY-NC-SA 4.0, \mathpile adheres to that stricter licensing. In all other cases, it operates under the CC BY-NC-SA 4.0 license.  If any data owner objects to the use of their data, we are willing to take appropriate action immediately, including removing the relevant data. 
    \\ \midrule
    \textbf{Have any third parties imposed IP-based or other restrictions on the data associated with the instances?} & Not to our knowledge. \\ \midrule
    \textbf{Do any export controls or other regulatory restrictions apply to the dataset or to individual instances?} & Not to our knowledge. \\ \midrule
    \multicolumn{2}{c}{\textsc{\textbf{Maintenance}}} \\ \midrule
    \textbf{Who will be supporting/hosting/maintaining the dataset?} & \mathpile will be hosted on the HuggingFace Hub. \\ \midrule
    \textbf{How can the owner/curator/manager of the dataset be contacted?} &  \texttt{stefanpengfei@gmail.com} \quad \texttt{zzwang.nlp@gmail.com} \\ \midrule
    \textbf{Is there an erratum?} & No. \\ \midrule
    \textbf{Will the dataset be updated?} & Yes, it is currently a work in progress and updates are ongoing. \\ \midrule
    \textbf{If others want to extend/augment/build on/contribute to the dataset, is there a mechanism for them to do so?} & No. \\ \bottomrule
    \caption{Datasheet for \mathpile, following \citet{10.1145/3458723-datasheets-for-datasets}.}
    \label{tab:mathpile-datasheet}
\end{longtable}


%

\clearpage

\section{Ethics Statement}
\label{appendix-sec:ethics-statement}

In the collection and creation of \mathpile, we strictly adhered to all copyright and licensing requirements of the data sources. Specifically, we gathered a large amount of data from the internet, including mathematical textbooks, web pages, and community Q\&A content, ensuring that the use of these data complies with the original licensing terms. Wikipedia, ProofWiki, and StackExchange are licensed under CC BY-SA (2.5, 3.0 or 4.0). Textbooks and arXiv are licensed under CC BY 4.0, CC BY-SA 4.0, CC BY-NC-SA 4.0 and others. Common Crawl follows the \href{https://commoncrawl.org/terms-of-use}{Common Crawl Foundation Terms of Use} and \href{https://huggingface.co/datasets/allenai/c4#license}{C4 license}. The final open-source \mathpile dataset is released under the CC BY-NC-SA 4.0 license. If the source data's license is more restrictive than CC BY-NC-SA 4.0, we adopt the stricter license.

However, during the collection of some data, such as publicly available and open-source textbooks, we did not obtain explicit consent from each author. We recognize that this may involve potential copyright issues. Therefore, we have implemented the following measures to mitigate and manage these risks:

\begin{enumerate}
    \item \textbf{Strict Selection of Data Sources}: We prioritize selecting data sources that are clearly marked with open licenses or public domain status, avoiding the use of content explicitly marked as copyright-protected or prohibited from distribution.
    \item \textbf{Adherence to Fair Use Principles}: When using copyrighted and non-commercially licensed content, we adhere to the principles of fair use, aiming to promote scientific research and educational purposes rather than commercial purposes, thereby not affecting the market value of the original content.
    \item \textbf{Acceptance of Feedback from Users and Content Authors}: We welcome feedback from data users and authors at any time to request the removal or modification of their data.
\end{enumerate}

\mathpile has been carefully curated and processed to minimize any potential ethical concerns. We also explicitly state that if any data owner objects to the use of their data, we are willing to take appropriate action immediately, including removing the relevant data. Through these measures, we strive to ensure the diversity and richness of the collected data while complying with relevant copyright and licensing regulations, thereby reducing potential legal risks. We bear full responsibility for any potential violations of rights or licensing issues that may arise from this dataset.


\section{Examples of \mathpile}
\label{appendix-sec:mathpile-example}

We provide some illustrative examples from each source in \mathpile, as shown in Figure~\ref{fig:mathpile-case-cc} to Figure~\ref{fig:mathpile-case-stackexchange}.

\begin{figure*}[ht]
\input{tab/mathpile-case-cc}
\caption{An example Common Crawl document in \mathpile}
\label{fig:mathpile-case-cc}
\end{figure*}

\begin{figure*}[ht]
\input{tab/mathpile-case-wikipedia}
\caption{An example Wikipedia document in \mathpile}
\label{fig:mathpile-case-wikipedia}
\end{figure*}

\begin{figure*}[ht]
\input{tab/mathpile-case-textbooks}
\caption{An example textbook document in \mathpile}
\label{fig:mathpile-case-textbooks-appendix}
\end{figure*}

\begin{figure*}[ht]
\input{tab/mathpile-case-proofwiki-theorem-proof}
\caption{An example ProofWiki (a theorem and its proof) document in \mathpile}
\label{fig:mathpile-case-proofwiki-theorem-proof}
\end{figure*}

\begin{figure*}[ht]
\input{tab/mathpile-case-proofwiki-definition}
\caption{An example  ProofWiki (definition) document in \mathpile}
\label{fig:mathpile-case-proofwiki-definition}
\end{figure*}

\begin{figure*}[ht]
\input{tab/mathpile-case-arxiv}
\caption{An example arXiv document in \mathpile}
\label{fig:mathpile-case-arxiv}
\end{figure*}

\begin{figure*}[ht]
\input{tab/mathpile-case-stackexchange}
\caption{An example StackExchange document in \mathpile. Here is a question from ``\texttt{matheducators}'' ``\texttt{.stackexchang.com}'' with two high-quality responses.}
\label{fig:mathpile-case-stackexchange}
\end{figure*}

\clearpage

\section{Details for Corpus Collection and Processing}
\label{appendix-sec:data-collection-details}

The subjects from which we collected papers on arXiv are listed in Table~\ref{tab:arxiv-subject-list}. The specific StackExchange sites from which we gathered data are listed in Table~\ref{tab:stackexchange-site-list}. We illustrate the LaTeX display issue with an example in  Figure~\ref{fig:latex-display-issue-case}.

During the collection process of arXiv, we undertook extensive transformations to enhance data clarity and consistency. Specifically, we (1) removed comments in each paper; (2) reverted many macro commands (e.g., ``\texttt{newcommand}'') to their original forms; (3) omitted figure environments while retaining captions and figure labels; (4)  excluded acknowledgements sections; (5) eliminated references in each paper; (6) condensed more than three consecutive empty lines to two; (7) replaced certain formatting commands like ``\texttt{hfill}'' and ``\texttt{vspace}'' with an empty line; (8) replaced the ``\texttt{maketitle}'' command in the main document body with the actual title (if available); (9) preserved only the content within the main body of the LaTex document.

We summarize the parts of the dataset collection (cf. \S~\ref{sec:data-collection}) and global data preprocessing (cf. \S~\ref{sec:global-data-processing}) where human intervention was involved and whether the cleaning process was automated in the Table~\ref{table:mathpile-collection-human-involvement} and Table~\ref{table:mathpile-processing-human-involvement}. We hope this provides a clearer understanding of \mathpile construction process.

\begin{table*}[h]
\caption{The subject list during collecting corpus from arXiv.}
\label{tab:arxiv-subject-list}
\centering
\scalebox{0.95}{
\begin{tabular}{m{14cm}<{\centering}}
\toprule
Subjects \\ \midrule
math.AG, math.AT, math.AP, math.CT, math.CA, math.CO, math.AC, math.CV, math.DG, math.DS, math.FA, math.GM, math.GN, math.GT, math.GR, math.HO, math.IT, math.KT, math.LO, math.MP, math.MG, math.NT, math.NA, math.OA, math.OC, math.PR, math.QA, math.RT, math.RA, math.SP, math.ST, math.SG, math-ph, quant-ph, cs.CC, cs.CG, cs.DM, cs.DS, cs.FL, cs.GT, cs.LG, cs.NA, cs.LO, q-fin.MF, stat.CO, stat.ML, stat.ME, stat.OT, stat.TH, econ.TH \\ \bottomrule
\end{tabular}}

\end{table*}

\begin{table*}[h]
\caption{The site list during collecting corpus from StackExchange.}
\label{tab:stackexchange-site-list}
\centering
\begin{tabular}{m{13.5cm}<{\centering}}
\toprule
Sites sourced from StackExchange \\ \midrule
math.stackexchange.com, mathoverflow.net, mathematica.stackexchange.com, matheducators.stackexchange.com, hsm.stackexchange.com, physics.stackexchange.com, proofassistants.stackexchange.com, tex.stackexchange.com, datascience.stackexchange, cstheory.stackexchange.com, cs.stackexchange.com\\ \bottomrule
\end{tabular}

\end{table*}

\begin{table}[ht]
\centering
\caption{Details of Human Involvement and Automation in the \mathpile Collection Process}
\scalebox{0.95}{
\begin{tabular}{p{3cm} p{6cm} p{4.5cm}}
\toprule
\textbf{MATHPILE Subset} & \textbf{Human Involvement in Data Collection} & \textbf{Cleaning Process Automated?} \\
\midrule
Textbooks & Manual search and download of open-source, free mathematics textbooks; quality check; setting cleaning rules & Yes, the automated application of the PDF conversion API and the document cleaning rules \\
\midrule
arXiv Papers & Manual selection of relevant mathematical field categories; setting latex cleaning and formatting rules & Yes, automated cleaning steps like comment removal and format conversion \\
\midrule
Wikipedia Mathematical Entries & Humans observed samples to define rules for cleaning irrelevant content such as copyright statements & Yes, automated HTML to Markdown conversion; and removal of extraneous lines \\
\midrule
ProofWiki Entries & No significant human intervention (mainly data dump selection, reformatting design) & Yes, automated text parsing and formatting \\
\midrule
StackExchange Discussions & Selection of relevant mathematics-related sites within the StackExchange network; setting filter thresholds & Yes, automated HTML parsing and conversion, score filtering \\
\midrule
Common Crawl Web Pages & Manual adjustment of TF-IDF rules to improve mathematical content identification & Yes, automated application of rule-based mathematical document filtering \\
\bottomrule
\end{tabular}
}
\label{table:mathpile-collection-human-involvement}
\end{table}

\begin{table}[ht]
\centering
\caption{Details of Human Involvement and Automation in the \mathpile Global Data Processing Steps}
\scalebox{0.9}{
\begin{tabular}{p{4.5cm} p{5cm} p{5cm}}
\toprule
\textbf{Global Data Processing Step} & \textbf{Human Involvement} & \textbf{Cleaning Process Automated?} \\
\midrule
Language Identification & Manual adjustment of thresholds based on observed false positives & Yes, using FastText with custom thresholds \\
\midrule
Data Cleaning \& Filtering & Human observation to define rules for filtering irrelevant content & Yes, automated application of rules for content filtering and removal \\
\midrule
Data Deduplication & Manual review of near-duplicate samples from different sources & Yes, automated using MinHash LSH \\
\midrule
Data Contamination Detection & Human verification of flagged benchmark duplicates & Yes, automated detection based on pre-defined criteria \\
\bottomrule
\end{tabular}
}
\label{table:mathpile-processing-human-involvement}
\end{table}

\begin{figure}[t]
\centering 
\includegraphics[width=0.98\textwidth]{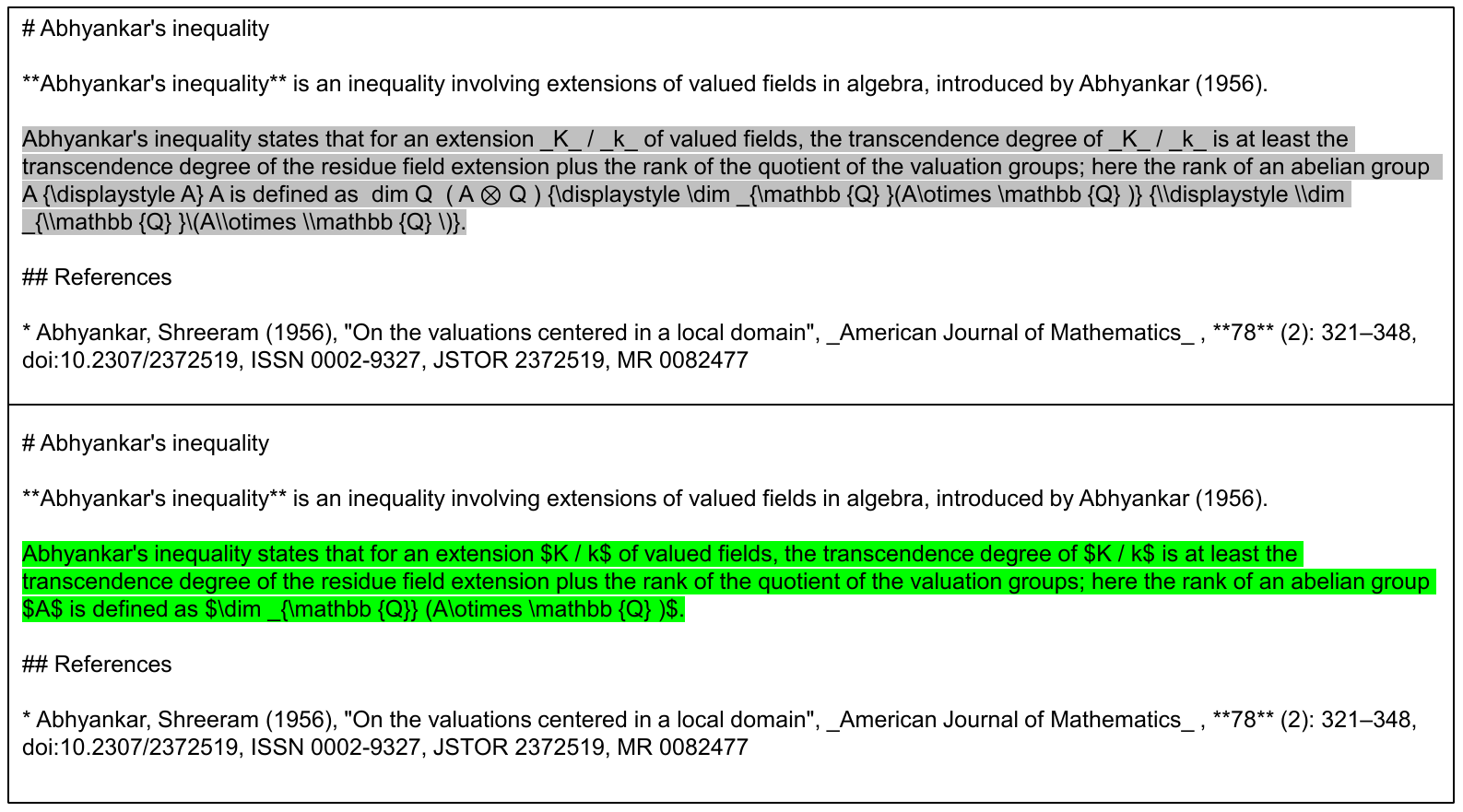} 
\caption{A document processed by \texttt{html2text} (above) compared to one obtained through another library \textbf{Resiliparse} plus DOM parsing (below).} 
\label{fig:latex-display-issue-case}
\end{figure}

\clearpage

\section{Example for Quality Annotation}

We present a cleaned example document with quality annotations (see Figure~\ref{fig:cleaned-example-doc-with-quality-annotation}).

\begin{figure}[ht]
\setlength{\columnsep}{1.5mm}
\begin{tcolorbox}[colback=wkblue!10!white,colframe=wkblue!100!blue,left=2mm, right=2mm,title=\small\textcolor{black}{A document from \textsc{MathPile}-Common Crawl}]
\begin{tiny}

\textcolor{meta-color}{\textbf{Text:}} 

This number is called the Copeland–Erdős constant, and is known to be irrational and normal. I believe its transcendence or otherwise is an open problem. This source claims that it has been proved to be transcendental, but the paper they refer to is the one in which it was proved to be normal and so I think the source is mistaken.

For now, the knowledge that it is almost surely transcendental will have to suffice!

Not the answer you're looking for? Browse other questions tagged number-theory transcendental-numbers or ask your own question.

Does the number $2.3,5,7,11,13\ldots$ exist and, if so, is it rational or irrational \&or transcendental?

Is $0.248163264128…$ a transcendental number?

What is the name of this number? Is it transcendental?

Is $ 0.112123123412345123456\dots $ algebraic or transcendental?

Is $0.121121111112111…$ a transcendental number?

Do we know a transcendental number with a proven bounded continued fraction expansion?

If we delete the non-primes from $e$, is the resulting number transcendental?

Is there any known transcendental $b$ such that $b^b$ is also transcendental?

\vspace{1.6mm}
...

\textcolor{wkblue}{\rule{\linewidth}{0.4pt}}

\textcolor{meta-color}{\textbf{Subset}}: Common Crawl

\textcolor{wkblue}{\rule{\linewidth}{0.4pt}}
\textcolor{meta-color}{\textbf{meta}}:

\hspace*{4mm}language\_detection\_score: 0.9118, 

\hspace*{4mm}char\_num\_after\_normalized: 887, 

\hspace*{4mm}contain\_at\_least\_two\_stop\_words: True, 

\hspace*{4mm}ellipsis\_line\_ratio: 0.0, idx: 95994, 

\hspace*{4mm}lines\_start\_with\_bullet\_point\_ratio: 0.0, 

\hspace*{4mm}mean\_length\_of\_alpha\_words: 4.2941, 

\hspace*{4mm}non\_alphabetical\_char\_ratio: 0.0234,

\hspace*{4mm}symbols\_to\_words\_ratio: 0.0117, 

\hspace*{4mm}uppercase\_word\_ratio: 0.0117

\hspace*{4mm}...

\end{tiny}

\end{tcolorbox}

\caption{An example document after cleaning and filtering with quality annotations}
\label{fig:cleaned-example-doc-with-quality-annotation}

\end{figure}

\section{Examples of Duplicates Encountered in the Deduplication Process}
\label{appendix-sec:dup-cases}

We provide some illustrative examples of duplicates from each source in the deduplication process, as shown in Table~\ref{tab:dup-case-CC} to Table~\ref{tab:dup-case-StackExchange}.

We also provide examples of downstream task benchmarks (i.e., MATH and MMLU-STEM) leaks identified during our data contamination detection process for our corpus (as shown in  Table~\ref{tab:data-contamination-case-in-textbooks} and Table~\ref{tab:data-contamination-case-in-commoncrawl}) and OpenWebMath (as shown in Table~\ref{tab:data-contamination-case-in-openwebmath}).

\begin{table*}[h]
\caption{Near-duplication matches found in CommonCrawl by MinHash LSH deduplication (in \duptext{italics}).}
\label{tab:dup-case-CC}
\centering

\vspace{0.1in}

\begin{small}
\begin{tabular}{p{2.9in}|p{2.9in}}
\toprule

\duptext{In algebraic topology we often encounter chain complexes with extra multiplicative structure. For example, the cochain complex of a topological space has what is called the $E_\infty$-algebra structure which comes from the cup product.}

\duptext{In this talk I present an idea for studying such chain complexes, $E_\infty$ differential graded algebras ($E_\infty$ DGAs), using stable homotopy theory. Namely, I discuss new equivalences between $E_\infty$ DGAS that are defined using commutative ring spectra.}

ring spectra are equivalent. \duptext{Quasi-isomorphic $E_\infty$ DGAs are $E_\infty$ topologically equivalent. However, the examples I am going to present show that the opposite is not true; there are $E_\infty$ DGAs that are $E_\infty$ topologically equivalent but not quasi-isomorphic. This says that between $E_\infty$ DGAs, we have more equivalences than just the quasi-isomorphisms.}

\duptext{I also discuss interaction of $E_\infty$ topological equivalences with the Dyer-Lashof operations and cases where $E_\infty$topological equivalences and quasi-isomorphisms agree.}
&

Özet : \duptext{In algebraic topology we often encounter chain complexes with extra multiplicative structure. For example, the cochain complex of a topological space has what is called the $E_\infty$-algebra structure which comes from the cup product. In this talk I present an idea for studying such chain complexes, $E_\infty$ differential graded algebras ($E_\infty$ DGAs), using stable homotopy theory. Namely, I discuss new equivalences between $E_\infty$ DGAS that are defined using commutative ring spectra}.We say $E_\infty$ DGAs are $E_\infty$ topologically equivalent when the corresponding commutative ring spectra are equivalent. \duptext{Quasi-isomorphic $E_\infty$ DGAs are $E_\infty$ topologically equivalent. However, the examples I am going to present show that the opposite is not true; there are $E_\infty$ DGAs that are $E_\infty$ topologically equivalent but not quasi-isomorphic. This says that between $E_\infty$ DGAs, we have more equivalences than just the quasi-isomorphisms. I also discuss interaction of $E_\infty$ topological equivalences with the Dyer-Lashof operations and cases where $E_\infty$ topological equivalences and quasi-isomorphisms agree.}
\\ 

\midrule

Université de la Saskatchewan, 1 - 4 juin 2015 www.smc.math.ca//2015f

Comité d'organisation

Financement étudiants

Minisymposia invités

Minisymposia libres

Conférences libres

Horaire - Minisymposa invités

Open Problems

Graphs and matrices

Responsable et président: Shaun Fallat et Karen Meagher (University of Regina)

\duptext{WAYNE BARRETT, Brigham Young University}

\duptext{The Fielder Vector and Tree Decompositions of Graphs [PDF]}

\duptext{In the 1970's Fiedler initiated a study of the second smallest eigenvalue of the Laplacian matrix $L$ of a graph and the corresponding eigenvector(s). These "Fiedler" vectors have become spectacularly successful in revealing properties of the associated graph. A tree decomposition $\cal T$ of a graph $G=(V,E)$ is an associated tree whose nodes are subsets of $V$ and whose edge set respects the structure of $G$. Tree decompositions have been used in the analysis of complex networks. This talk reports on an algorithm developed by students at BYU for obtaining a tree decomposition by means of Fiedler vector(s) of $G$.}

\vspace{5mm}

\duptext{...}

\vspace{5mm}





\duptext{Graphs that have a weighted adjacency matrix with spectrum $\{\lambda_1^{n-2}, \lambda_2^2\}$ [PDF]}

\duptext{In this talk I will characterize the graphs which have an edge weighted adjacency matrix belonging to the class of $n \times n$ involutions with spectrum equal to $\{ \lambda_1^{n-2}, \lambda_2^{2} \}$ for some $\lambda_1$ and some $\lambda_2$. The connected graphs turn out to be the cographs constructed as the join of at least two unions of pairs of complete graphs, and possibly joined with one other complete graph.}
&
University of Saskatchewan, June 1 - 4, 2015 www.cms.math.ca//2015

Invited Minisymposia

Contributed Minisymposia

Contributed Talks

Graphs and matrices

Organizer and Chair: Shaun Fallat and Karen Meagher (University of Regina)

\duptext{WAYNE BARRETT, Brigham Young University}

\duptext{The Fielder Vector and Tree Decompositions of Graphs [PDF]}

\duptext{In the 1970's Fiedler initiated a study of the second smallest eigenvalue of the Laplacian matrix $L$ of a graph and the corresponding eigenvector(s). These "Fiedler" vectors have become spectacularly successful in revealing properties of the associated graph. A tree decomposition $\cal T$ of a graph $G=(V,E)$ is an associated tree whose nodes are subsets of $V$ and whose edge set respects the structure of $G$. Tree decompositions have been used in the analysis of complex networks. This talk reports on an algorithm developed by students at BYU for obtaining a tree decomposition by means of Fiedler vector(s) of $G$.}

\vspace{5mm}

\duptext{...}

\vspace{5mm}





\duptext{Graphs that have a weighted adjacency matrix with spectrum $\{\lambda_1^{n-2}, \lambda_2^2\}$ [PDF]}

\duptext{In this talk I will characterize the graphs which have an edge weighted adjacency matrix belonging to the class of $n \times n$ involutions with spectrum equal to $\{ \lambda_1^{n-2}, \lambda_2^{2} \}$ for some $\lambda_1$ and some $\lambda_2$. The connected graphs turn out to be the cographs constructed as the join of at least two unions of pairs of complete graphs, and possibly joined with one other complete graph.}
\\
\bottomrule
\end{tabular}
\end{small}

\end{table*}

\begin{table*}[h]
\caption{A near-duplication match found in arXiv by MinHashLSH deduplication (in \duptext{italics}).}
\label{tab:dup-case-arXiv}
\resizebox{\textwidth}{!}{%
\begin{tabular}{l|l}
\toprule
\begin{tabular}[c]{@{}l@{}}
\duptext{\textbackslash{}begin\{document\}} \\ \\ \duptext{\textbackslash{}title\{}Querying Guarded Fragments via Resolution\duptext{\}}\\ \\ \duptext{\textbackslash{}section\{A detailed example\}}\\ \\ \duptext{Here we include some equations and theorem-like environments to show}\\ \duptext{how these are labeled in a supplement and can be referenced from the}\\ \duptext{main text.}\\ 
\duptext{Consider the following equation:}\\ 
\duptext{\textbackslash{}begin\{equation\}}\\ 
\duptext{\textbackslash{}label\{eq:suppa\}}\\  
\duptext{a\textasciicircum{}2 + b\textasciicircum{}2 = c\textasciicircum{}2.}\\ \duptext{\textbackslash{}end\{equation\}}\\ 
\duptext{You can also reference equations such as \textbackslash{}cref\{eq:matrices,eq:bb\}}\\
\duptext{from the main article in this supplement.\ } \\
\duptext{\textbackslash{}lipsum{[}100-101{]}}\\ \\ 
\duptext{\textbackslash{}begin\{theorem\} }\\   \duptext{An example theorem. }\\ \duptext{\textbackslash{}end\{theorem\}}\\ \\ 
\duptext{\textbackslash{}lipsum{[}102{]} }\\ \\ 
\duptext{\textbackslash{}begin\{lemma\} }\\   \duptext{An example lemma.}\\ \duptext{\textbackslash{}end\{lemma\}}\\ \\ \duptext{\textbackslash{}lipsum{[}103-105{]}}\\ \\ \duptext{Here is an example citation: \textbackslash{}cite\{KoMa14\}. }\\ \\ \duptext{\textbackslash{}section{[}Proof of Thm{]}\{Proof of \textbackslash{}cref\{thm:bigthm\}\}}\\ \duptext{\textbackslash{}label\{sec:proof\}}\\ \\ \duptext{\textbackslash{}lipsum{[}106-112{]}}\\ \\ \duptext{\textbackslash{}section\{Additional experimental results\}}\\ \duptext{\textbackslash{}Cref\{tab:foo\} shows additional}\\ \duptext{supporting evidence.}\\ \\ \duptext{\textbackslash{}begin\{table\}{[}htbp{]}}\\ \duptext{\{\textbackslash{}footnotesize}\\   \duptext{\textbackslash{}caption\{Example table\}  \textbackslash{}label\{tab:foo\}}\\ \duptext{\textbackslash{}begin\{center\}}\\   \duptext{\textbackslash{}begin\{tabular\}\{|c|c|c|\} \textbackslash{}hline} \\   \duptext{ Species \& \textbackslash{}bf Mean \& \textbackslash{}bf Std.$\sim$Dev. \textbackslash{}\textbackslash \textbackslash{}hline}\\     \duptext{ 1 \& 3.4 \& 1.2 \textbackslash{}\textbackslash}\\    \duptext{ 2 \& 5.4 \& 0.6 \textbackslash{}\textbackslash \textbackslash{}hline}\\   \duptext{\textbackslash{}end\{tabular\}}\\ \duptext{\textbackslash{}end\{center\}}\\ \duptext{\}}\\ \duptext{\textbackslash{}end\{table\}}\\ \\ \duptext{\textbackslash{}end\{document\}}\end{tabular} 

&

  \begin{tabular}[c]{@{}l@{}}\duptext{\textbackslash{}begin\{document\}}\\ \\ \duptext{\textbackslash{}title\{}Limited memory Kelley's Method Converges for Composite\\ Convex and Submodular Objectives\duptext{\}}\\ \\ \duptext{\textbackslash{}section\{A detailed example\}}\\ \\ \duptext{Here we include some equations and theorem-like environments to show}\\ \duptext{how these are labeled in a supplement and can be referenced from the}\\ \duptext{main text.}\\ 
\duptext{Consider the following equation:}\\ 
\duptext{\textbackslash{}begin\{equation\}}\\ 
\duptext{\textbackslash{}label\{eq:suppa\}}\\  
\duptext{a\textasciicircum{}2 + b\textasciicircum{}2 = c\textasciicircum{}2.}\\ \duptext{\textbackslash{}end\{equation\}}\\ 
\duptext{You can also reference equations such as \textbackslash{}cref\{eq:matrices,eq:bb\}}\\
\duptext{from the main article in this supplement.\ } \\
\duptext{\textbackslash{}lipsum{[}100-101{]}}\\ \\ 
\duptext{\textbackslash{}begin\{theorem\} }\\   \duptext{An example theorem. }\\ \duptext{\textbackslash{}end\{theorem\}}\\ \\ 
\duptext{\textbackslash{}lipsum{[}102{]} }\\ \\ 
\duptext{\textbackslash{}begin\{lemma\} }\\   \duptext{An example lemma.}\\ \duptext{\textbackslash{}end\{lemma\}}\\ \\ \duptext{\textbackslash{}lipsum{[}103-105{]}}\\ \\ \duptext{Here is an example citation: \textbackslash{}cite\{KoMa14\}. }\\ \\ \duptext{\textbackslash{}section{[}Proof of Thm{]}\{Proof of \textbackslash{}cref\{thm:bigthm\}\}}\\ \duptext{\textbackslash{}label\{sec:proof\}}\\ \\ \duptext{\textbackslash{}lipsum{[}106-112{]}}\\ \\ \duptext{\textbackslash{}section\{Additional experimental results\}}\\ \duptext{\textbackslash{}Cref\{tab:foo\} shows additional}\\ \duptext{supporting evidence.}\\ \\ \duptext{\textbackslash{}begin\{table\}{[}htbp{]}}\\ \duptext{\{\textbackslash{}footnotesize}\\   \duptext{\textbackslash{}caption\{Example table\}  \textbackslash{}label\{tab:foo\}}\\ \duptext{\textbackslash{}begin\{center\}}\\   \duptext{\textbackslash{}begin\{tabular\}\{|c|c|c|\} \textbackslash{}hline} \\   \duptext{ Species \& \textbackslash{}bf Mean \& \textbackslash{}bf Std.$\sim$Dev. \textbackslash{}\textbackslash \textbackslash{}hline}\\     \duptext{ 1 \& 3.4 \& 1.2 \textbackslash{}\textbackslash}\\    \duptext{ 2 \& 5.4 \& 0.6 \textbackslash{}\textbackslash \textbackslash{}hline}\\   \duptext{\textbackslash{}end\{tabular\}}\\ \duptext{\textbackslash{}end\{center\}}\\ \duptext{\}}\\ \duptext{\textbackslash{}end\{table\}}\\ \\ \duptext{\textbackslash{}end\{document\}}\end{tabular}  \\ \bottomrule
\end{tabular}%
}

\end{table*}

\begin{table*}[h]
    \centering
    \begin{tabular}{c|c}
    \toprule
    \begin{minipage}{0.5\textwidth}
    \begin{lstlisting}[basicstyle=\small\ttfamily, frame=none, breaklines=true]
\section{Definition:Constructed Semantics/Instance 4/Rule of Idempotence}
Tags: Formal Semantics

\begin{theorem}
The Rule of Idempotence:
:$(p \lor p) \implies p$
is a tautology in Instance 4 of constructed semantics.
\end{theorem}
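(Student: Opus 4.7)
The plan is to verify the claim by direct computation on the finite set of truth values that defines Instance 4 of constructed semantics. A formula is, by definition, a tautology in this instance exactly when it takes a designated value under every assignment of truth values to its propositional variables. Since the formula $(p \lor p) \implies p$ contains only the single variable $p$, the verification reduces to a one-column truth-table check: run through the admissible values of $p$ and confirm that in each case the resulting evaluation is designated.

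Concretely, I would first recall from the definition of Instance 4 the three ingredients needed: (i) the set $V$ of truth values, (ii) the subset $D \subseteq V$ of designated values, and (iii) the truth functions interpreting $\lor$ and $\implies$. The computation then proceeds in two elementary steps. For each $v \in V$, look up $v \lor v$ in the disjunction table; call the result $v'$. Next, look up $v' \implies v$ in the implication table; call the result $w$. The theorem asserts precisely that $w \in D$ for every $v \in V$.

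A convenient simplification, which I would invoke if (as is standard) $\lor$ is idempotent in Instance 4, is that $v \lor v = v$ holds for every $v \in V$. This collapses the whole verification to checking that $v \implies v \in D$ for every $v \in V$, i.e.\ the reflexive law for implication in Instance 4, which is a routine lookup on the implication table. If the disjunction of Instance 4 is not idempotent on the nose, the argument is unchanged in structure but one must carry out the two lookups for all $v \in V$ separately.

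The main obstacle is essentially bookkeeping rather than conceptual: since Instance 4 has more than two truth values, one has to take care that the truth tables for $\lor$ and $\implies$ are applied correctly and that the notion of ``designated value'' used matches the one under which tautologies are defined in this instance. Once the relevant tables and the designated set $D$ have been written down, the proof is a finite case check that fits in a single displayed table, and no further machinery is required.
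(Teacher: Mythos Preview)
Your plan is essentially the paper's: a finite truth-table check over the four truth values of Instance~4, confirming that the main connective always evaluates to the designated value~$0$. The one substantive difference is that in this constructed semantics $\implies$ is not a primitive with its own table; it is introduced via the definitional abbreviation $\mathbf A \implies \mathbf B =_{\text{def}} \neg\mathbf A \lor \mathbf B$. The paper therefore first rewrites $(p\lor p)\implies p$ as $\neg(p\lor p)\lor p$ and evaluates using only the $\neg$ and $\lor$ tables. Your anticipated simplification is borne out---$\lor$ is indeed idempotent here ($v\lor v=v$ for each $v\in\{0,1,2,3\}$)---so the whole verification collapses to checking $\neg v \lor v = 0$ for each $v$, which is exactly what the displayed table in the paper records. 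Modulo the unfolding of $\implies$, your approach and the paper's coincide.
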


\begin{proof}
By the definitional abbreviation for the conditional:
:$\mathbf A \implies \mathbf B =_{\text{def}} \neg \mathbf A \lor \mathbf B$
the Rule of Idempotence can be written as:
: $\neg \left({p \lor p}\right) \lor p$
This evaluates as follows:
:$\begin{array}{|cccc|c|c|} \hline
\neg & (p & \lor & p) & \lor & p \\
\hline
1 & 0 & 0 & 0 & 0 & 0 \\
0 & 1 & 1 & 1 & 0 & 1 \\
0 & 2 & 2 & 2 & 0 & 2 \\
2 & 3 & 3 & 3 & 0 & 3 \\
\hline
\end{array}$
{{qed}}
Category:Formal Semantics
\end{proof}
    \end{lstlisting}
    \end{minipage}
    &
    \begin{minipage}{0.5\textwidth}
    \begin{lstlisting}[basicstyle=\small\ttfamily, frame=none, breaklines=true]
\section{Definition:Constructed Semantics/Instance 5/Rule of Idempotence}
Tags: Formal Semantics

\begin{theorem}
The Rule of Idempotence:
:$(p \lor p) \implies p$
is a tautology in Instance 5 of constructed semantics.
\end{theorem}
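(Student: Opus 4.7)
The plan is to mirror exactly the structure of the Instance 4 proof shown immediately above. Since both results concern the same formula $(p \lor p) \implies p$ and differ only in which instance of the constructed semantics is being used, the proof should be a straightforward truth-table verification, adapted to the truth-value set and connective tables of Instance 5.

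First, I would invoke the definitional abbreviation for the conditional, $\mathbf{A} \implies \mathbf{B} =_{\text{def}} \neg \mathbf{A} \lor \mathbf{B}$, to rewrite the target formula as $\neg(p \lor p) \lor p$. This reduction is purely syntactic and does not depend on which instance of the semantics we have chosen, so it transfers without change from the Instance 4 proof. The statement to verify then becomes: under every assignment of a truth value to $p$ drawn from the value set of Instance 5, the expression $\neg(p \lor p) \lor p$ evaluates to the designated value (i.e.\ $0$, as in the pattern established by Instance 4).

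Next, I would build the truth table. Presuming that Instance 5 has five truth values $\{0,1,2,3,4\}$ (paralleling the four-valued Instance 4 table displayed above), I would enumerate the five possible assignments of $p$, compute $p \lor p$ using the $\lor$-table of Instance 5, apply $\neg$ using the $\neg$-table of Instance 5, and then apply $\lor$ once more against $p$. The format would exactly match the Instance 4 display, namely a table with columns for $\neg$, $(p$, $\lor$, $p)$, $\lor$, $p$, producing in the penultimate column the value of $\neg(p \lor p) \lor p$ for each row. Tautology in this setting amounts to the claim that every entry of that penultimate column is $0$.

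The main obstacle is simply bookkeeping: one must use the correct $\neg$ and $\lor$ tables for Instance 5, since these are what distinguish it from Instance 4. Assuming the idempotence law $p \lor p = p$ continues to hold in Instance 5 (as is standard in these constructed semantics), the computation collapses to checking $\neg p \lor p$, which should evaluate to $0$ on every value of $p$; this is the essential content. If idempotence of $\lor$ fails in Instance 5, then one must compute $p \lor p$ row by row from the table and then proceed as above. Either way, no conceptual difficulty arises beyond carefully reading off values from the defining tables of Instance 5, and the proof concludes with \texttt{\{\{qed\}\}} and the category tag \texttt{Category:Formal Semantics}, matching house style.
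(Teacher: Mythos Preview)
Your approach is exactly the paper's: rewrite $(p \lor p) \implies p$ as $\neg(p \lor p) \lor p$ via the definitional abbreviation for $\implies$, then exhaustively evaluate in a truth table and observe that the main connective column is identically $0$.

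There is, however, one concrete factual error. You presume Instance 5 has five truth values $\{0,1,2,3,4\}$, inferring this from the name. It does not: Instance 5, like Instance 4, is a four-valued semantics on $\{0,1,2,3\}$. The label ``Instance 5'' is just an index in a sequence of constructed semantics, not a count of truth values. The paper's table for Instance 5 has the same four rows as Instance 4; $p \lor p = p$ continues to hold, and the only difference from Instance 4 lies in the negation table (in Instance 5 one has $\neg 2 = 3$ and $\neg 3 = 0$, versus $\neg 2 = 0$ and $\neg 3 = 2$ in Instance 4). With that correction your plan goes through verbatim and matches the paper's proof line for line.
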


\begin{proof}
By the definitional abbreviation for the conditional:
:$\mathbf A \implies \mathbf B =_{\text{def}} \neg \mathbf A \lor \mathbf B$
the Rule of Idempotence can be written as:
: $\neg \left({p \lor p}\right) \lor p$
This evaluates as follows:
:$\begin{array}{|cccc|c|c|} \hline
\neg & (p & \lor & p) & \lor & p \\
\hline
1 & 0 & 0 & 0 & 0 & 0 \\
0 & 1 & 1 & 1 & 0 & 1 \\
3 & 2 & 2 & 2 & 0 & 2 \\
0 & 3 & 3 & 3 & 0 & 3 \\
\hline
\end{array}$
{{qed}}
Category:Formal Semantics
\end{proof}
    \end{lstlisting}
    \end{minipage}
    \\ \midrule
    \begin{minipage}{0.5\textwidth}
    \begin{lstlisting}[basicstyle=\small\ttfamily, frame=none, breaklines=true]
\section{Imaginary Part of Complex Product}
Tags: Complex Multiplication

\begin{theorem}
Let $z_1$ and $z_2$ be complex numbers.
Then:
:$\map \Im {z_1 z_2} = \map \Re {z_1} \, \map \Im {z_2} + \map \Im {z_1} \, \map \Re {z_2}$
\end{theorem}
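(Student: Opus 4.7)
The plan is to reduce the claim to a direct computation using the standard Cartesian form of a complex number. First, I would write $z_1 = x_1 + i\,y_1$ and $z_2 = x_2 + i\,y_2$ where $x_j, y_j \in \mathbb{R}$, and invoke the defining property that $\Re(z_j) = x_j$ and $\Im(z_j) = y_j$ for $j \in \{1,2\}$. The existence and uniqueness of this representation is a prerequisite lemma that I would cite rather than reprove.

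Next, I would expand the product $z_1 z_2$ using the distributive law for complex multiplication together with the relation $i^2 = -1$:
\[
z_1 z_2 \;=\; (x_1 + i\,y_1)(x_2 + i\,y_2) \;=\; x_1 x_2 + i\,x_1 y_2 + i\,y_1 x_2 + i^2\,y_1 y_2 \;=\; (x_1 x_2 - y_1 y_2) + i\,(x_1 y_2 + y_1 x_2).
\]
Then I would read off the imaginary part of the right-hand side, which is a real number by construction, and rewrite the coefficients back in terms of $\Re$ and $\Im$ to obtain $\Im(z_1 z_2) = \Re(z_1)\,\Im(z_2) + \Im(z_1)\,\Re(z_2)$.

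There is no genuine obstacle: the only subtle point is that extracting the imaginary part in the final step requires knowing that the decomposition into real and imaginary parts is unique, so that identifying $a + i\,b$ with $\Re = a$, $\Im = b$ is legitimate. Everything else is routine algebra, and the analogous result for the real part (which is likely a companion theorem in the same source) would be proved by reading off the other coefficient from the same expansion.
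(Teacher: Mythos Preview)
Your proposal is correct and matches the paper's proof essentially line for line: write $z_1 = x_1 + i y_1$, $z_2 = x_2 + i y_2$, invoke the definition of complex multiplication to get $z_1 z_2 = (x_1 x_2 - y_1 y_2) + i(x_1 y_2 + x_2 y_1)$, and then read off the imaginary part and rewrite it as $\Re(z_1)\,\Im(z_2) + \Im(z_1)\,\Re(z_2)$. Your remark about the companion real-part result is also accurate; the paper pairs this with exactly that theorem, proved by the same expansion.
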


\begin{proof}
Let $z_1 = x_1 + i y_1$ and $z_2 = x_2 + i y_2$.
By definition of complex multiplication:
:$z_1 z_2 = x_1 x_2 - y_1 y_2 + i \paren {x_1 y_2 + x_2 y_1}$
Then
{{begin-eqn}}
{{eqn | l = \map \Im {z_1 z_2}
      | r = x_1 y_2 + x_2 y_1
      | c = {{Defof|Imaginary Part}}
}}
{{eqn | r = \map \Re {z_1} \, \map \Im {z_2} + \map \Im {z_1} \, \map \Re {z_2}
      | c = {{Defof|Imaginary Part}}
}}
{{end-eqn}}
{{qed}}
\end{proof}
    \end{lstlisting}
    \end{minipage}
    &
    \begin{minipage}{0.5\textwidth}
    \begin{lstlisting}[basicstyle=\small\ttfamily, frame=none, breaklines=true]
\section{Real Part of Complex Product}
Tags: Complex Multiplication

\begin{theorem}
Let $z_1$ and $z_2$ be complex numbers.
Then:
:$\map \Re {z_1 z_2} = \map \Re {z_1} \map \Re {z_2} - \map \Im {z_1} \map \Im {z_2}$
\end{theorem}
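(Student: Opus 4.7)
The plan is to mirror the structure of the preceding \textbf{Imaginary Part of Complex Product} theorem, since the two results are the real/imaginary twin halves of the same computation. I would first fix notation by writing $z_1 = x_1 + i y_1$ and $z_2 = x_2 + i y_2$, where $x_j = \Re(z_j)$ and $y_j = \Im(z_j)$ by the definitions of real and imaginary part. This reduces the theorem to a statement purely about the four real numbers $x_1, x_2, y_1, y_2$.

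Next I would invoke the definition of complex multiplication, which gives the standard expansion $z_1 z_2 = (x_1 x_2 - y_1 y_2) + i(x_1 y_2 + x_2 y_1)$. Extracting the real part by definition yields $\Re(z_1 z_2) = x_1 x_2 - y_1 y_2$, and substituting back $x_j = \Re(z_j)$, $y_j = \Im(z_j)$ gives exactly $\Re(z_1)\Re(z_2) - \Im(z_1)\Im(z_2)$, which is the desired identity.

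In ProofWiki style, the body would be an \texttt{eqn} chain inside a \texttt{begin-eqn/end-eqn} block, with the first line justified by \emph{Definition of Imaginary/Real Part} and the second by the same definition applied in reverse. Since the companion theorem immediately above already establishes the template and uses precisely this structure, I would copy its layout verbatim, changing only which two cross-terms are kept versus which are dropped.

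There is essentially no obstacle here: the result is definitional once complex multiplication is expanded, and no nontrivial lemma is invoked. The only mild care point is bookkeeping of signs, ensuring the $-y_1 y_2$ term (not $+y_1 y_2$) appears, which is automatic from $i^2 = -1$ in the multiplication rule. Thus the proof is a two- or three-line display equation followed by \texttt{\{\{qed\}\}}.
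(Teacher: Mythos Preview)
Your proposal is correct and matches the paper's proof essentially line for line: write $z_1 = x_1 + i y_1$, $z_2 = x_2 + i y_2$, expand the product via the definition of complex multiplication, then read off the real part and rewrite $x_1 x_2 - y_1 y_2$ as $\Re(z_1)\Re(z_2) - \Im(z_1)\Im(z_2)$. The paper's justification tags are both \emph{Definition of Real Part}, so you may want to use that rather than ``Imaginary/Real Part'' on the first line, but otherwise there is nothing to change.
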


\begin{proof}
Let $z_1 = x_1 + i y_1$ and $z_2 = x_2 + i y_2$.
By definition of complex multiplication:
:$z_1 z_2 = x_1 x_2 - y_1 y_2 + i \paren {x_1 y_2 + x_2 y_1}$
Then:
{{begin-eqn}}
{{eqn | l = \map \Re {z_1 z_2}
      | r = x_1 x_2 - y_1 y_2
      | c = {{Defof|Real Part}}
}}
{{eqn | r = \map \Re {z_1} \map \Re {z_2} - \map \Im {z_1} \map \Im {z_2}
      | c = {{Defof|Real Part}}
}}
{{end-eqn}}
{{qed}}
\end{proof}
    \end{lstlisting}
    \end{minipage}
    \\
    \bottomrule
    \end{tabular}
    \vspace{-0.1in}
\caption{Near-duplication matches found in ProofWiki by MinHash LSH deduplication.}
\label{tab:dup-case-ProofWiki}
\end{table*}


\begin{table*}[]
\caption{Duplication matches found in Wikipedia by MinHash LSH deduplication (in \duptext{italics}).}
\label{tab:dup-case-Wikipedia}
\begin{small}
\begin{tabular}{p{2.9in}|p{2.9in}}
\toprule

\duptext{\#  HP-42S}

 \vspace{10pt}

\duptext{The **HP-42S RPN Scientific** is a programmable RPN Scientific hand held calculator introduced by Hewlett-Packard in 1988. It has advanced functions suitable for applications in mathematics, linear algebra, statistical analysis, computer science and others.}

 \vspace{10pt}

\duptext{HP-42S}

\duptext{The HP-42S}

   \duptext{---}

\duptext{Type| Programmable scientific}

\duptext{Manufacturer| Hewlett-Packard}

\duptext{Introduced| 1988}

\duptext{Discontinued| 1995   Calculator}

\duptext{Entry mode| RPN}

\duptext{Precision| 12 display digits (15 digits internally),[1] exponent ±499}

\duptext{Display type| LCD dot-matrix}

\duptext{Display size| 2 lines, 22 characters, 131×16 pixels   CPU}

\duptext{Processor| Saturn (Lewis)   Programming}

\duptext{Programming language(s)| RPN key stroke (fully merged)}

\duptext{Firmware memory| 64 KB of ROM}

\duptext{Program steps| 7200   Interfaces}

\duptext{Ports| IR (Infrared) printing   Other}

\duptext{Power supply| 3×1.5V button cell batteries (Panasonic LR44, Duracell PX76A/675A or Energizer 357/303)}

\duptext{Weight| 6 oz (170 g)}

\duptext{Dimensions| 148×80×15mm}
   
 \vspace{10pt}

\duptext{\#\# Overview}
   
 \vspace{10pt}

\duptext{Perhaps the HP-42S was to be released as a replacement for the aging HP-41 series as it is designed to be compatible with all programs written for the HP-41. Since it lacked expandability, and lacked any real I/O ability, both key features of the HP-41 series, it was marketed as an HP-15C replacement.}
   
 \vspace{10pt}

\duptext{The 42S, however, has a much smaller form factor than the 41, and features many more built-in functions, such as a matrix editor, complex number support, an equation solver, user-defined menus, and basic graphing capabilities (the 42S can draw graphs only by programs). Additionally, it features a two-line dot matrix display, which made stack manipulation easier to understand.}
   
 \vspace{10pt}

\duptext{Production of the 42S ended in 1995.[2] As this calculator is regarded amongst the best ever made in terms of quality, key stroke feel, ease of programming, and daily usability for engineers,[3] in the HP calculator community the 42S has become famous for its high prices in online auctions, up to several times its introduction price, which has created a scarcity for utility end users.}
&
\duptext{\#  HP-42S}

 \vspace{10pt}

\duptext{The **HP-42S RPN Scientific** is a programmable RPN Scientific hand held calculator introduced by Hewlett-Packard in 1988. It has advanced functions suitable for applications in mathematics, linear algebra, statistical analysis, computer science and others.}

 \vspace{10pt}

\duptext{HP-42S}

\duptext{The HP-42S}

   \duptext{---}

\duptext{Type| Programmable scientific}

\duptext{Manufacturer| Hewlett-Packard}

\duptext{Introduced| 1988}

\duptext{Discontinued| 1995   Calculator}

\duptext{Entry mode| RPN}

\duptext{Precision| 12 display digits (15 digits internally),[1] exponent ±499}

\duptext{Display type| LCD dot-matrix}

\duptext{Display size| 2 lines, 22 characters, 131×16 pixels   CPU}

\duptext{Processor| Saturn (Lewis)   Programming}

\duptext{Programming language(s)| RPN key stroke (fully merged)}

\duptext{Firmware memory| 64 KB of ROM}

\duptext{Program steps| 7200   Interfaces}

\duptext{Ports| IR (Infrared) printing   Other}

\duptext{Power supply| 3×1.5V button cell batteries (Panasonic LR44, Duracell PX76A/675A or Energizer 357/303)}

\duptext{Weight| 6 oz (170 g)}

\duptext{Dimensions| 148×80×15mm}
   
 \vspace{10pt}

\duptext{\#\# Overview}
   
 \vspace{10pt}

\duptext{Perhaps the HP-42S was to be released as a replacement for the aging HP-41 series as it is designed to be compatible with all programs written for the HP-41. Since it lacked expandability, and lacked any real I/O ability, both key features of the HP-41 series, it was marketed as an HP-15C replacement.}
   
 \vspace{10pt}

\duptext{The 42S, however, has a much smaller form factor than the 41, and features many more built-in functions, such as a matrix editor, complex number support, an equation solver, user-defined menus, and basic graphing capabilities (the 42S can draw graphs only by programs). Additionally, it features a two-line dot matrix display, which made stack manipulation easier to understand.}
   
 \vspace{10pt}

\duptext{Production of the 42S ended in 1995.[2] As this calculator is regarded amongst the best ever made in terms of quality, key stroke feel, ease of programming, and daily usability for engineers,[3] in the HP calculator community the 42S has become famous for its high prices in online auctions, up to several times its introduction price, which has created a scarcity for utility end users.}
\\
\bottomrule
\end{tabular}%
\end{small}

\end{table*}

\begin{table*}[h]
\centering

\caption{Duplication matches found in Textbooks by MinHash LSH deduplication (in \duptext{italics}).}
\label{tab:dup-case-textbooks}
\begin{small}
\begin{tabular}{p{2.9in}|p{2.9in}}
\toprule

\duptext{\# Basic Concepts in Graph Theory}

\vspace{10pt}

\duptext{\#\# Section 1: What is a Graph?}

\vspace{10pt}

\duptext{There are various types of graphs, each with its own definition. Unfortunately, some people apply the term "graph" rather loosely, so you can't be sure what type of graph they're talking about unless you ask them. After you have finished this chapter, we expect you to use the terminology carefully, not loosely. To motivate the various definitions, we'll begin with some examples.}

\vspace{10pt}

\duptext{Example 1 (A computer network) Computers are often linked with one another so that they can interchange information. Given a collection of computers, we would like to describe this linkage in fairly clean terms so that we can answer questions such as "How can we send a message from computer A to computer B using the fewest possible intermediate computers?"}

\vspace{10pt}

\duptext{We could do this by making a list that consists of pairs of computers that are connected. Note that these pairs are unordered since, if computer $\mathrm{C}$ can communicate with computer $\mathrm{D}$, then the reverse is also true. (There are sometimes exceptions to this, but they are rare and we will assume that our collection of computers does not have such an exception.) Also, note that we have implicitly assumed that the computers are distinguished from each other: It is insufficient to say that "A PC is connected to a Mac." We must specify which $\mathrm{PC}$ and which Mac. Thus, each computer has a unique identifying label of some sort.}

\vspace{10pt}

\duptext{For people who like pictures rather than lists, we can put dots on a piece of paper, one for each computer. We label each dot with a computer's identifying label and draw a curve connecting two dots if and only if the corresponding computers are connected. Note that the shape of the curve does not matter (it could be a straight line or something more complicated) because we are only interested in whether two computers are connected or not. Below are two such pictures of the same graph. Each computer has been labeled by the initials of its owner.}

\vspace{5mm}

\duptext{...}

\vspace{5mm}

\duptext{\#\# Basic Concepts in Graph Theory}

\vspace{10pt}

\duptext{The notation $\mathcal{P}_{k}(V)$ stands for the set of all $k$-element subsets of the set $V$. Based on the previous example we have}

\vspace{10pt}

\duptext{Definition 1 (Simple graph) A simple graph $G$ is a pair $G=(V, E)$ where}

\vspace{10pt}

\duptext{- $V$ is a finite set, called the vertices of $G$, and}

\duptext{- $E$ is a subset of $\mathcal{P}_{2}(V)$ (i.e., a set $E$ of two-element subsets of $V$ ), called the edges of $G$.}

\vspace{5mm}

\duptext{...}

\vspace{5mm}

&

\duptext{\# Basic Concepts in Graph Theory}

\vspace{10pt}

\duptext{\#\# Section 1: What is a Graph?}

\vspace{10pt}

\duptext{There are various types of graphs, each with its own definition. Unfortunately, some people apply the term "graph" rather loosely, so you can't be sure what type of graph they're talking about unless you ask them. After you have finished this chapter, we expect you to use the terminology carefully, not loosely. To motivate the various definitions, we'll begin with some examples.}

\vspace{10pt}

\duptext{Example 1 (A computer network) Computers are often linked with one another so that they can interchange information. Given a collection of computers, we would like to describe this linkage in fairly clean terms so that we can answer questions such as "How can we send a message from computer A to computer B using the fewest possible intermediate computers?"}

\vspace{10pt}

\duptext{We could do this by making a list that consists of pairs of computers that are connected. Note that these pairs are unordered since, if computer $\mathrm{C}$ can communicate with computer $\mathrm{D}$, then the reverse is also true. (There are sometimes exceptions to this, but they are rare and we will assume that our collection of computers does not have such an exception.) Also, note that we have implicitly assumed that the computers are distinguished from each other: It is insufficient to say that "A PC is connected to a Mac." We must specify which $\mathrm{PC}$ and which Mac. Thus, each computer has a unique identifying label of some sort.}

\vspace{10pt}

\duptext{For people who like pictures rather than lists, we can put dots on a piece of paper, one for each computer. We label each dot with a computer's identifying label and draw a curve connecting two dots if and only if the corresponding computers are connected. Note that the shape of the curve does not matter (it could be a straight line or something more complicated) because we are only interested in whether two computers are connected or not. Below are two such pictures of the same graph. Each computer has been labeled by the initials  of its owner.}

\vspace{5mm}

\duptext{...}

\vspace{5mm}

\duptext{\#\# Basic Concepts in Graph Theory}

\vspace{10pt}

\duptext{The notation $\mathcal{P}_{k}(V)$ stands for the set of all $k$-element subsets of the set $V$. Based on the previous example we have}

\vspace{10pt}

\duptext{Definition 1 (Simple graph) A simple graph $G$ is a pair $G=(V, E)$ where}

\vspace{10pt}

\duptext{- $V$ is a finite set, called the vertices of $G$, and}

\duptext{- $E$ is a subset of $\mathcal{P}_{2}(V)$ (i.e., a set $E$ of two-element subsets of $V$ ), called the edges of $G$.}

\vspace{5mm}

\duptext{...}

\vspace{5mm}

\\ 
\bottomrule
\end{tabular}
\end{small}

\end{table*}

\begin{table*}[h]
\centering
\caption{Near-duplication matches found in StackExchange by MinHash LSH deduplication (in \duptext{italics}).}
\label{tab:dup-case-StackExchange}
\begin{small}
\begin{tabular}{p{2.9in}|p{2.9in}}
\toprule

This was originally posted on mathoverflow, but it seems it's more appropriate to post here.

\duptext{Let $B$ be a paracompact space with the property that any (topological) vector bundle $E \to B$ is trivial. What are some non-trivial examples of such spaces, and are there any interesting properties that characterize them?}

\duptext{For simple known examples we of course have contractible spaces, as well as the 3-sphere $S^3$. This one follows from the fact that its rank $n$ vector bundles are classified by $\pi_3 (BO(n)) = \pi_2 (O(n)) = 0$. I'm primarily interested in the case where $B$ is a closed manifold. Do we know any other such examples?}

\duptext{There is this nice answer to a MSE question which talks about using the Whitehead tower of the appropriate classifying space to determine whether a bundle is trivial or not. This seems like a nice tool (of which I am not familiar with) to approaching this problem. As a secondary question, could I ask for some insight/references to this approach?}

\duptext{EDIT Now that we know from the answer all the examples for closed $3$-manifolds (integral homology spheres), I guess I can now update the question to the case of higher odd dimensions. Does there exist a higher dimensional example?}
&

\vspace{5mm}

\duptext{Let $B$ be a paracompact space with the property that any (topological) vector bundle $E \to B$ is trivial. What are some non-trivial examples of such spaces, and are there any interesting properties that characterize them?}

\duptext{For simple known examples we of course have contractible spaces, as well as the 3-sphere $S^3$. This one follows from the fact that its rank $n$ vector bundles are classified by $\pi_3 (BO(n)) = \pi_2 (O(n)) = 0$. I'm primarily interested in the case where $B$ is a closed manifold. Do we know any other such examples?}

\duptext{There is this nice answer to a MSE question which talks about using the Whitehead tower of the appropriate classifying space to determine whether a bundle is trivial or not. This seems like a nice tool (of which I am not familiar with) to approaching this problem. As a secondary question, could I ask for some insight/references to this approach?}

\duptext{EDIT Now that we know from the answers all the examples for closed $3$-manifolds, I guess I can now update the question to the case of higher odd dimensions. Does there exist a higher dimensional example?}
\\ \midrule
This is a copy of my question on MSE (https://math.stackexchange.com/questions/3372432) because this forum seems better suited for historical questions:

\duptext{In 1985, Gosper used the not-yet-proven formula by Ramanujan}

\duptext{$$\frac{ 1 }{\pi } = \frac{2\sqrt{2}}{99^2}\cdot \sum_{n=0}^\infty \frac{(4n)!}{(n!)^4}\cdot\frac{26390 n+1103}{396^{4n}}$$}

\duptext{to compute $17\cdot10^6$ digits of $\pi$, at that time a new world record.}

\duptext{Here (https://www.cs.princeton.edu/courses/archive/fall98/}

\duptext{cs126/refs/pi-ref.txt) it reads:}

\duptext{There were a few interesting things about Gosper's computation. First, when he decided to use that particular formula, there was no proof that it actually converged to pi! Ramanujan never gave the math behind his work, and the Borweins had not yet been able to prove it, because there was some very heavy math that needed to be worked through. It appears that Ramanujan simply observed the equations were converging to the 1103 in the formula, and then assumed it must actually be 1103. (Ramanujan was not known for rigor in his math, or for providing any proofs or intermediate math in his formulas.) The math of the Borwein's proof was such that after he had computed 10 million digits, and verified them against a known calculation, his computation became part of the proof. Basically it was like, if you have two integers differing by less than one, then they have to be the same integer.}

\duptext{Now my historical question: Who was the first to prove this formula? Was it Gosper because he added the last piece of the proof, or was it the Borweins, afterwards? And was Gosper aware of this proof when he did his computation?}

&
\vspace{11mm}
\duptext{In 1985, Gosper used the not-yet-proven formula by Ramanujan}

\duptext{$$\frac{ 1 }{\pi } = \frac{2\sqrt{2}}{99^2}\cdot \sum_{n=0}^\infty \frac{(4n)!}{(n!)^4}\cdot\frac{26390 n+1103}{99^{4n}}$$}

\duptext{to compute $17\cdot10^6$ digits of $\pi$, at that time a new world record.}

\duptext{Here (https://www.cs.princeton.edu/courses/archive/fall98/}

\duptext{cs126/refs/pi-ref.txt) it reads:}

\duptext{There were a few interesting things about Gosper's computation. First, when he decided to use that particular formula, there was no proof that it actually converged to pi! Ramanujan never gave the math behind his work, and the Borweins had not yet been able to prove it, because there was some very heavy math that needed to be worked through. It appears that Ramanujan simply observed the equations were converging to the 1103 in the formula, and then assumed it must actually be 1103. (Ramanujan was not known for rigor in his math, or for providing any proofs or intermediate math in his formulas.) The math of the Borwein's proof was such that after he had computed 10 million digits, and verified them against a known calculation, his computation became part of the proof. Basically it was like, if you have two integers differing by less than one, then they have to be the same integer.}

\duptext{Now my historical question: Who was the first to prove this formula? Was it Gosper because he added the last piece of the proof, or was it the Borweins, afterwards? And was Gosper aware of this proof when he did his computation?}

\\
\bottomrule
\end{tabular}
\end{small}

\end{table*}

\begin{table*}[]
\caption{Exact match examples from the test set of MATH benchmark found in Textbooks by line-level exact match deduplication (in \datacontamination{italics}).}
\label{tab:data-contamination-case-in-textbooks}
\centering
\begin{small}
\begin{tabular}{p{5.8in}}
\toprule


\datacontamination{Coin $A$ is flipped three times and coin $B$ is flipped four times. What is the probability that the number of heads obtained from flipping the two fair coins is the same?}

\vspace{5mm}

$\underline{\text{Video Solution}}$

\vspace{5mm}

Answer:

\vspace{5mm}

\#\# Problem 3.2.2 (AMC 10)

\vspace{5mm}

Two tour guides are leading six tourists. The guides decide to split up. Each tourist must choose one of the guides, but with the stipulation that each guide must take at least one tourist. How many different groupings of guides and tourists are possible?

\vspace{5mm}
......
\vspace{5mm}

\datacontamination{One morning each member of Angela's family drank an 8-ounce mixture of coffee with milk. The amounts of coffee and milk varied from cup to cup, but were never zero. Angela drank a quarter of the total amount of milk and a sixth of the total amount of coffee. How many people are in the family?}

\vspace{5mm}

Answer:

\vspace{5mm}

\#\# Problem 20.2.15 (AMC 12)

\vspace{5mm}

The state income tax where Kristin lives is levied at the rate of $p \%$ of the first $\$ 28000$ of annual income plus $(p+2) \%$ of any amount above $\$ 28000$. Kristin noticed that the state income tax she paid amounted to $(p+0.25) \%$ of her annual income. What was her annual income?

\vspace{5mm}

Answer:

\vspace{5mm}
......
\vspace{5mm}

\datacontamination{Find the least positive integer $k$ for which the equation $\left\lfloor\frac{2002}{n}\right\rfloor=k$ has no integer solutions for $n$. (The notation $\lfloor x\rfloor$ means the greatest integer less than or equal to $x$.)}

\vspace{5mm}

Answer:

\vspace{5mm}

\#\# Problem 40.1.9 (AIME)

\vspace{5mm}

Find the number of positive integers $n$ less than 1000 for which there exists a positive real number $x$ such that $n=x\lfloor x\rfloor$.', '', 'Note: $\lfloor x\rfloor$ is the greatest integer less than or equal to $x$.'

\vspace{5mm}
......
\vspace{5mm}

\datacontamination{What is the sum of the roots of $z^{12}=64$ that have a positive real part?}

\vspace{5mm}

Answer:

\vspace{5mm}

\#\# Problem 45.8.13 (AMC 12)

\vspace{5mm}

The complex numbers $z$ and $w$ satisfy $z^{13}=w, w^{11}=z$, and the imaginary part of $z$ is $\sin \frac{m \pi}{n}$, for relatively prime positive integers $m$ and $n$ with $m<n$. Find $n$.'

\vspace{5mm}

Answer:

\vspace{5mm}
......
\vspace{5mm}





\\
\bottomrule
\end{tabular}%
\end{small}

\end{table*}

\begin{table*}[]
\centering
\caption{Exact match examples from the test set of MATH benchmark found in CommonCrawl by line-level exact match deduplication (in \datacontamination{italics}). In these examples, we only observe repeated questions from MATH, but do not identify duplicate answers.}
\label{tab:data-contamination-case-in-commoncrawl}
\begin{small}
\begin{tabular}{p{5.8in}}
\toprule


\datacontamination{Let $x$ and $y$ be real numbers satisfying $x^4y^5+y^4x^5=810$ and $x^3y^6+y^3x^6=945$. Evaluate $2x^3+(xy)^3+2y^3$.}

Let $x_1< x_2 < x_3$ be the three real roots of the equation $\sqrt{2014} x^3 - 4029x^2 + 2 = 0$. Find $x_2(x_1+x_3)$.

Let $m$  be the largest real solution to the equation$$\frac{3}{x-3}+\frac{5}{x-5}+\frac{17}{x-17}+\frac{19}{x-19}=x^2-11x-4$$There are positive integers $a$, $b$, and $c$ such that $m=a+\sqrt{b+\sqrt{c}}$. Find $a+b+c$.

Let $f(x) = x^4 + ax^3 + bx^2 + cx + d$. If $f(-1) = -1$, $f(2)=-4$, $f(-3) = -9$, and $f(
4) = -16$. Find $f(1)$.

Solve in positive integers $x^2 - 4xy + 5y^2 = 169$.

Solve in integers the question $x+y=x^2 -xy + y^2$.

Solve in integers $\frac{x+y}{x^2-xy+y^2}=\frac{3}{7}$

Prove the product of $4$ consecutive positive integers is a perfect square minus $1$.

For any arithmetic sequence whose terms are all positive integers, show that if one term is a perfect square, this sequence must have infinite number of terms which are perfect squares.

Prove there exist infinite number of positive integer $a$ such that for any positive integer $n$, $n^4 + a$ is not a prime number.

\vspace{5mm}
......
\vspace{5mm}

\datacontamination{The real root of the equation $8x^3 - 3x^2 - 3x - 1 = 0$ can be written in the form $\frac{\sqrt[3]a + \sqrt[3]b + 1}{c}$, where $a$, $b$, and $c$ are posit ive integers. Find $a+b+c$.}

Find the number of positive integers $m$ for which there exist nonnegative integers $x_0$, $x_1$ , $\dots$ , $x_{2011}$ such that
 \[m^{x_0} = \sum_{k = 1}^{2011} m^{x_k}.\]
 
 Suppose $x$ is in the interval $[0, \frac{\pi}{2}]$ and $\log_{24\sin x} (24\cos x)=\frac{3}{2}$. Find $24\cot^2 x$.
 
 Let $P(x)$ be a quadratic polynomial with real coefficients satisfying $x^2 - 2x + 2 \le P(x) \le 2x^2 - 4x + 3$ for all real numbers
 $x$, and suppose $P(11) = 181$. Find $P(16)$.
 
 Let $(a,b,c)$ be the real solution of the system of equations $x^3 - xyz = 2$, $y^3 - xyz = 6$, $z^3 - xyz
 = 20$. The greatest possible value of $a^3 + b^3 + c^3$ can be written in the form $\frac {m}{n}$, where $m$ and $n$ are relatively prime positive integers. Find $m + n$.

 Find the smallest positive integer $n$ with the property that the polynomial $x^4 - nx + 63$ can be written as a product of two nonconstant polynomials with integer coefficients.
 
 The zeros of the function $f(x) = x^2-ax+2a$ are integers. What is the sum of the possible values of $a$?

Let $a$, $b$, and $c$ be three distinct one-digit numbers. What is the maximum value of the sum of the roots of the equation $(x-a)(x-b)+(x-b)(x-c)=0$?

At the theater children get in for half price. The price for $5$ adult tickets and $4$ child tickets is $24.50$. How much would $8$ adult tickets and $6$ child tickets cost?

The quadratic equation $x^2+ px + 2p = 0$ has solutions $x = a$ and $x = b$. If the quadratic equation $x^2+ cx + d = 0$ has solutions $
x = a + 2$ and $x = b + 2$, what is the value of d?

\vspace{2mm}
......
\vspace{2mm}

\datacontamination{Find the smallest positive integer $n$ with the property that the polynomial $x^4 - nx + 63$ can be written as a product of two nonconstant polynomials with integer coefficients.}

The zeros of the function $f(x) = x^2-ax+2a$ are integers. What is the sum of the possible values of $a$?

Let $a$, $b$, and $c$ be three distinct one-digit numbers. What is the maximum value of the sum of the roots of the equation $(x-a)(x-b)+(x-b)(x-c)=0$ ?

At the theater children get in for half price. The price for $5$ adult tickets and $4$ child tickets is $24.50$. How much would $8$ adult tickets and $6$ child tickets cost?

The quadratic equation $x^2+ px + 2p = 0$ has solutions $x = a$ and $x = b$. If the quadratic equation $x^2+ cx + d = 0$ has solutions $x = a + 2$ and $x =
b + 2$, what is the value of d?

PolynomialAndEquation Root Delta SpecialEquation Function NumberTheoryBasic IndeterminateEquation SqueezeMethod Pythagore anTripletFormula TrigIdentity Inequality LogicalAndReasoning

AMC10/12 AIME IMO

US International

With Solutions

© 2009 - 2023 Math All Star

\vspace{2mm}
......


\\
\bottomrule
\end{tabular}%
\end{small}

\end{table*}

\begin{table*}[]
\centering
\caption{Exact match examples from the test set of MATH benchmark (upper) and MMLU-STEM (bottom) found in OpenWebMath by line-level exact match deduplication (in \datacontamination{italics}). In these examples, we only observe repeated questions, but do not identify duplicate answers.}
\label{tab:data-contamination-case-in-openwebmath}
\begin{small}
\begin{tabular}{p{5.8in}}
\toprule


\datacontamination{The sum of an infinite geometric series is a positive number $S$, and the second term in the series is $1$. What is the smallest possible value of $S?$}

\vspace{5mm}

$\textbf{(A)}\ \frac{1+\sqrt{5}}{2} \qquad \textbf{(B)}\ 2 \qquad \textbf{(C)}\ \sqrt{5} \qquad \textbf{(D)}\ 3 \qquad \textbf{(E)}\ 4$

\vspace{5mm}

\#\# Problem 17

\vspace{5mm}

All the numbers $2, 3, 4, 5, 6, 7$ are assigned to the six faces of a cube, one number to each face. For each of the eight vertices of the cube, a product of three numbers is computed, where the three numbers are the numbers assigned to the three faces that include that vertex. What is the greatest possible value of the sum of these eight products?

\vspace{5mm}

$\textbf{(A)}\ 312 \qquad \textbf{(B)}\ 343 \qquad \textbf{(C)}\ 625 \qquad \textbf{(D)}\ 729 \qquad \textbf{(E)}\ 1680$

\vspace{5mm}
...
\vspace{5mm}

\datacontamination{What is the value of $b+c$ if $x^2+bx+c>0$ only when $x\in (-\infty, -2)\cup(3,\infty)$?}

\vspace{5mm}

May 11, 2020


\vspace{5mm}
...
\vspace{5mm}

\datacontamination{An ambulance travels at 40 mph and can follow a 20-mile route making no stops to get to the hospital. A helicopter travels at one mile per  minute, and the air route is 15 miles to get to the same hospital. However, the helicopter takes three minutes for takeoff and three minutes for landing. How many fewer minutes does it take for the helicopter to complete its trip (takeoff, flight and landing) than for the ambulance to complete its trip?}

\vspace{5mm}

Apr 6, 2020

\vspace{5mm}

\#1

+34

0

\vspace{5mm}

Keep in mind that Time=Distance/Speed


\\ \midrule

\datacontamination{What is the greatest possible area of a triangular region with one vertex at the center of a circle of radius 1 and the other two vertices on the circle?}

\vspace{2mm}

A bad first step is to put the center at the origin, one point at (1,0) , and one point at (sin x, cos x).

\vspace{2mm}

A start is the area of a triangle with included angle expression $${a \times b \times \sin \theta} \over {2}$$

\vspace{2mm}

Assuming $\theta$ in radians. If theta is $\pi/2$ then we have a right triangle. Let a=b=1. Area expression is $$A=(\sin \theta) / 2$$ This is maximum for $\theta = \pi/2$.

\vspace{2mm}

Answer is maximum area for a right triangle.


\vspace{2mm}
...
\vspace{2mm}

\\
\bottomrule
\end{tabular}%
\end{small}

\end{table*}

\clearpage

\section{Evaluation of Continal Pre-trained Models on General Langauge Benchmarks}
\label{appendix-sec:general-benchmarks-eval}

Does continual pre-training on \mathpile lead to improvements in general language benchmarks? To explore this, we evaluated some continual pre-trained models on \mathpile (in Table~\ref{tab:continual-pretrain-results}) in several representative benchmarks including PIQA~\citep{bisk2020piqa}, 
ARC-Easy~\citep{clark2018arc}, ARC-Challenge~\citep{clark2018arc}, SciQ~\citep{welbl2017crowdsourcing-sciq} and HellaSwag~\citep{zellers2019hellaswag}. For these evaluations, we used the infrastructure provided by EleutherAI’s lm-evaluation-harness,~\footnote{\url{https://github.com/EleutherAI/lm-evaluation-harness}} and following common practices, we report the accuracy (acc norm) metric.

\begin{table}[h!]
\centering
\caption{Performance of continual pre-trained models on general language benchmarks}
\begin{tabular}{lccccc}
\toprule
\textbf{Models} & \textbf{PiQA} & \textbf{ARC-Challenge} & \textbf{ARC Easy} & \textbf{SciQ} & \textbf{Hallswag} \\
\midrule
Mistral-7B & 81.93 & 53.75 & 79.58 & 94.0 & 81.05 \\
+ Textbooks & 80.14 & 52.73 & 79.92 & 95.6 & 81.15 \\
+ Wikipedia & 80.57 & 56.48 & 79.71 & 94.8 & 82.07 \\
+ Stackexchange & 80.41 & 49.66 & 75.38 & 90.5 & 82.87 \\
\bottomrule
\end{tabular}
\label{table:general-language-performance}
\end{table}

One important point that needs to be emphasised is that when enhancing a model's capabilities in a specific domain, it is typically necessary to mix the new domain-specific training data with the original training data, which helps to prevent catastrophic forgetting. In current experiments, we conducted continual pre-training exclusively on math-specific data, which means that we did not necessarily expect improvements in general language modeling benchmarks and, in some cases, a regression could occur. As shown in Table~\ref{table:general-language-performance}, after continual pre-training on \mathpile subsets, the model's general language modeling abilities did not experience significant degradation. In fact, there were some improvements on certain benchmarks, though some metrics did see slight declines.

\end{document}